\newcounter{relctr} 
\everydisplay\expandafter{\the\everydisplay\setcounter{relctr}{0}} 
\newcommand\labelrel[2]{%
  \begingroup
    \refstepcounter{relctr}%
    \stackrel{\textnormal{(\alph{relctr})}}{\mathstrut{#1}}%
    \originallabel{#2}%
  \endgroup
}
\definecolor{green}{HTML}{FF7F00}
\newtheorem{theorem}{Theorem}
\newtheorem*{remark}{Remark}
\newcommand{\E}{\ensuremath{\mathbb{E}}\xspace}
\newcommand{\R}{\ensuremath{\mathbb{R}}\xspace}
\newcommand{\N}{\ensuremath{\mathcal{N}}\xspace}
\newcommand{\vtheta}{\ensuremath{\vec{\theta}}\xspace}
\newcommand{\loss}{\ensuremath{\mathcal{L}}\xspace}
\newcommand{\nloss}{\ensuremath{\tilde{\loss}}\xspace}
\newcommand{\mloss}{\ensuremath{\bar{\loss}}\xspace}
\newcommand{\vx}{\ensuremath{\vec{x}}\xspace}
\newcommand{\vz}{\ensuremath{\vec{z}}\xspace}
\newcommand{\vepsilon}{\ensuremath{\vec{\varepsilon}}\xspace}
\newcommand{\ud}{\operatorname{\mathrm{d}}\xspace}
\newcommand{\norm}[1]{\ensuremath{\left\|#1\right\|}\xspace}
\newcommand{\convd}{\ensuremath{\xrightarrow{d}}\xspace}
\begin{document}

\title{An Adaptive Re-evaluation Method for Evolution Strategy under Additive Noise}

\author{Catalin-Viorel Dinu}
\orcid{0009-0008-8490-9946}
\affiliation{%
  \institution{LIACS, Leiden University}
  \city{Leiden}
  \country{The Netherlands}}
\email{viorel.dinu00@gmail.com}

\author{Yash J. Patel}
\orcid{0009-0007-3060-6950}
\affiliation{%
  \institution{applied Quantum algorithms, Leiden University}
  \institution{LIACS, Leiden University}
  \city{Leiden}
  \country{The Netherlands}
}
\email{y.j.patel@liacs.leidenuniv.nl}

\author{Xavier Bonet-Monroig}
\orcid{0000-0003-4895-4180}
\affiliation{%
\institution{Honda Research Institute Europe GmbH \\ Frankfurt, Germany}
\institution{Instituut-Lorentz, Leiden University}
  \city{Leiden}
  \country{The Netherlands}}
\email{xavier.bonet@honda-ri.de}

\author{Hao Wang}
\orcid{0000-0002-4933-5181}
\affiliation{%
  \institution{applied Quantum algorithms, Leiden University}
  \institution{LIACS, Leiden University}
  \city{Leiden}
  \country{The Netherlands}}
\email{h.wang@liacs.leidenuniv.nl}

\renewcommand{\shortauthors}{Dinu et al.}



\begin{abstract}
The Covariance Matrix Adaptation Evolutionary Strategy (CMA-ES) is one of the most advanced algorithms in numerical black-box optimization. For noisy objective functions, several approaches were proposed to mitigate the noise, e.g., re-evaluations of the same solution or adapting the population size.
In this paper, we devise a novel method to adaptively choose the optimal re-evaluation number for function values corrupted by additive Gaussian white noise. We derive a theoretical lower bound of the expected improvement achieved in one iteration of CMA-ES, given an estimation of the noise level and the Lipschitz constant of the function's gradient.
Solving for the maximum of the lower bound, we obtain a simple expression of the optimal re-evaluation number.
We experimentally compare our method to the state-of-the-art noise-handling methods for CMA-ES on a set of artificial test functions across various noise levels, optimization budgets, and dimensionality. Our method demonstrates significant advantages in terms of the probability of hitting near-optimal function values.
\end{abstract}

\begin{CCSXML}
<ccs2012>
   <concept>
       <concept_id>10003752.100000000.0010061.10011795000.0000000</concept_id>
       <concept_desc>Theory of computation~Random search heuristics</concept_desc>
       <concept_significance>3500</concept_significance>
       </concept>
   <concept>
       <concept_id>10010147.10010178.10010205.10010208000000.00000000.00000000</concept_id>
       <concept_desc>Computing methodologies~Continuous space search</concept_desc>
       <concept_significance>5300</concept_significance>
       </concept>
 </ccs2012>
\end{CCSXML}

\ccsdesc[300]{Theory of computation~Random search heuristics}
\ccsdesc[500]{Computing methodologies~Continuous space search}

\keywords{Evolutionary Strategy, Lipschitz Constant, Noisy Optimization, CMA-ES, Black-Box Optimization, Additive Gaussian Noise}


\maketitle

\section{Introduction}
Optimization problems are central to various scientific and engineering fields~\cite{kochenderfer2019algorithms,martins2021engineering}.
Typically, these problems are analyzed under ideal conditions, assuming a noiseless environment. However, many real-world optimization problems involve noise, which can distort the true objective function, making the optimization landscape less reliable.
Noise in the objective function can arise from various sources, such as measurement errors, environmental variability, or the inherent randomness of the system \cite{rakshit2017noisy}.
In response, several noise models have been proposed in the literature, which can broadly be categorized into two types:
\begin{itemize}
    \item \textbf{Additive Noise} \cite{dang2015efficient, rowe2021evolutionary}: This model assumes that the noise added to the objective function value is independent of the function value itself. It is expressed as $\nloss(\vx) = \loss(\vx) + \tau\mathcal{N}(0,1)$, where $\tau$ represents the standard deviation of the noise.
    \item \textbf{Multiplicative Noise} \cite{uchida2024cma}: In this model, the noise scales with the objective function value.
It is expressed as $\nloss(\vx) = (1+\tau z)\loss(\vx)$, where $z$ can be a Gaussian or uniform random variable.
\end{itemize}

In noisy black-box optimization, evolutionary algorithms (EAs) have shown promising performance due to the intrinsic population dynamics, which improves the robustness against noise~\cite{arnold2002noisy, BOS04, hansen2008method, rakshit2017noisy}.
The Covariance Matrix Adaptation Evolution Strategy (CMA-ES)~\cite{Hansen16a} is the state-of-the-art algorithm among EAs~\cite{varelas2018comparative}.
To further improve the capabilities of CMA-ES over noisy evaluations, several noise-handling methods have been proposed, of which the most popular are:
\begin{itemize}
    \item \textbf{Population size adaptation}~\cite{735433, harik1999gambler, 9647022} involves dynamically modifying the population size to mitigate the noise in function values.
    Using a larger population increases the probability of selecting candidate points whose fitness values are closer to the noiseless value.
    \item \textbf{Learning rate adaptation}~\cite{nomura2023cma} adjusts the learning rate according to the noise level. Smaller steps can reduce sensitivity to noise, leading to steady and progressive gains toward the optimum.
    \item \textbf{Re-evaluating of the objective function} is the most common noise-handling method.
    for each candidate multiple times and then take the average to reduce the noise effect~\cite{aizawa1993dynamic, 6791253, hansen2008method, bonet2023performance, SRLR12}.
    This approach helps smooth out artificial fluctuations in the function landscape induced by the noise. 
\end{itemize}

In this work, we devise a novel method to determine the optimal re-evaluation number for each candidate point. For this, we consider objective functions with Lipschitz continuous gradient and an additive Gaussian noise model and derive a lower bound on the expected improvement of the function values at each iteration of CMA-ES.
In turn, the analytical bound gives us a simple expression of the optimal number of re-evaluations for each candidate of the CMA-ES iteration.
We implement this method into CMA-ES, an extension that we call \textbf{Adaptive Re-evaluation} method (AR-CMA-ES).
To benchmark our method, we use a wide range of test functions with different levels of noise of the objectives.
Our experimental results show that AR-CMA-ES outperforms existing noise-handling methods at all noise levels, achieving a much better accuracy-to-target across the test benchmarks.
To summarize our contributions,
\begin{itemize}
    \item we have derived a theoretical lower bound of the expected improvement of noiseless function values in one iteration of CMA-ES regardless of the objective function;
    \item we have chosen the optimal re-evaluation number by maximizing the efficiency metric, which is the expected improvement normalized by the re-evaluations; 
    \item we have obtained a simple analytical expression for the optimal re-evaluation number and provide estimation procedures for the parameters required by the expression. 
\end{itemize}

\section{Background}\label{sec:preliminary}
\paragraph{Problem formulation:}
We aim to minimize a single-objective, black-box, differentiable function $\loss: \R^d \rightarrow \R$.
In this study, we specifically address the scenario involving additive Gaussian noise, noting that similar analytical approaches can be applied to other types of noise.
The noisy function value is represented as:
\begin{equation*}
    \nloss(\vx)= \loss(\vx) + \tau\mathcal{N}(0, 1).
\end{equation*}
We assume that the gradient of the function $\loss$ is Lipschitz continuous, meaning $\exists K<\infty$ such that $\norm{\nabla\loss(\vx) - \nabla\loss(\vx')}_2 \leq K \norm{\vx - \vx'}_2$ for all $\vx,\vx'\in\R^d$.

The re-evaluation method estimated $\loss(\vx)$ through the sample mean, is commonly used as a noise-mitigation method.
According to the Central Limit Theorem (CLT), we have $\sqrt{M}(\loss(\vx) - \mloss(\vx)) \convd \tau\mathcal{N}(0,1)
$, where $\mloss(\vx) = M^{-1}\sum_{i=1}^{M}y_i$ is computed by taking $M$ independent and identically distributed (i.i.d.) samples $y_i$ drawn from $\mloss(\vx)$.

Determining the appropriate value of $M$ is crucial.
Ideally, $M$ should be large enough to ensure that the re-evaluated estimates $\mloss(\vx^i)$ and $\mloss(\vx^j)$ can be distinguished with high probability, i.e., $\tau/\sqrt{M} \ll |\loss(\vx^i) - \loss(\vx^j)|.$
Considering that the set $\{\vx\,^i\}_i$ is contained within a compact subset of $\R^d$, we encounter the following two scenarios:

\begin{itemize}
    \item When $\loss$ exhibits a large local Lipschitz constant, a smaller $M$ is sufficient since $|\loss(\vx^i) - \loss(\vx^j)|$ is relatively large.
    \item When the Lipschitz constant is small, the difference $|\loss(\vx^i) - \loss(\vx^j)|$ is also small, requiring a much larger $M$ to ensure that the noise does not obscure these differences.
\end{itemize} 

\begin{algorithm}[t]
\caption{AR-CMA-ES. Our modifications to the standard CMA-ES are highlighted.}\label{alg:cma-es}
\textbf{Procedure:} AR-CMA-ES($\nloss$, $B$, $\lambda$, $\vec{x}_L, \vec{x}_U$)\;
\textbf{Input:} a noisy objective function $\nloss$, population size $\lambda$, evaluation budget $B$, $[\vec{x}_L, \vec{x}_U]\subseteq\R^d$\;
    $\sigma \leftarrow 0.1 \times\norm{\vec{x}_U -\vec{x}_L}_{\infty}$\Comment*[r]{\small step size}
    $\mathbf{C}\leftarrow \mathbf{I}$\Comment*[r]{\small covariance matrix}
    \textcolor{green}{$M \leftarrow 1, \vec{g} \leftarrow 0$}\;
    Sample $\vec{m}$ u.a.r.~in $[\vec{x}_L, \vec{x}_U]$\;
    \textcolor{green}{Estimate the noise level $\tau$}\;
    \Repeat{$B \leq 0$}{
        \For{$i\in[1..\lambda]$}{
            $\vx^{\,i} \leftarrow \vec{m} + \mathbf{C}^{1/2}\vec{\varepsilon}\,^i, \ \vec{\varepsilon}\,^i\sim \sigma\N(0, \mathbf{I})$\;
            $\mloss(\vx\,^i) \leftarrow \sum_{i=1}^M \nloss(\vx\,^i) / M$\;
            $\Delta\mloss(\vx^{\,^i}) \leftarrow \mloss(\vec{m}) - \mloss(\vx^{\,^i})$\;
        }
        \textcolor{green}{$A \leftarrow -\min \{\Delta\mloss(\vx\,^i)\}_i$}\;
        $B \leftarrow B - \lambda M$\;
        $\textcolor{green}{w_i \leftarrow \frac{\Delta\mloss^i + A}{\sum_{k=1}^\lambda\Delta\mloss^k + \lambda A}}$\Comment*[r]{Eq.~\eqref{eq:proportional-weight}}
        $\vec{m} \leftarrow \vec{m} + \sum_{i=1}^{\textcolor{green}{\lambda}} w_{i} \mathbf{C}^{1/2}\vec{\varepsilon}\,^i$\;
        $s_{\text{max}} \leftarrow$ the largest eigenvalue of $\mathbf{C}$\;
        \textcolor{green}{Estimate the Lipschitz constant $K$ of $\nabla \loss$}\;
        \textcolor{green}{$\vec{g} \leftarrow (1-\alpha)\vec{g} - \frac{\alpha}{\lambda\sigma^2}\sum_{i=1}^\lambda (\Delta \mloss^{i} + A)\vec{\varepsilon}\,^i$}\;
        \textcolor{green}{$a \leftarrow\frac{dKs_{\text{max}}\tau^2}{4\lambda}$}\Comment*[r]{\small Eq.~\eqref{eq:efficiency-lower-bound}}
        \textcolor{green}{$b \leftarrow (A - \frac{\sigma^2(\lambda+d+1)Ks_{\text{max}}}{4\lambda})\norm{\vec{g}}^2_2 - \frac{A^2dKs_{\text{max}}}{4\lambda}$}\;
        \textcolor{green}{$M \leftarrow (1-\beta) M + 2a/b$} \Comment*[r]{\small Eq.~\eqref{eq:optimal-M}}
        \tcp{\small See~\cite{Hansen16a}}
        Update $\mathbf{C}$ and $\sigma$ with \textcolor{green}{$\{w_i\}_i$} and $\{\vec{\varepsilon}\,^i\}_i$\;
    }
    \textbf{Output:} $\vec{m}$
\end{algorithm}

\paragraph{CMA-ES}
The Covariance Matrix Adaptation Evolution Strategy (CMA-ES)~\cite{Hansen16a} is a widely used black-box optimization algorithm for continuous, single-objective problems~\cite{hansen2008method, LoshchilovH16, SalimansHCS17, bonet2023performance}.
CMA-ES maintains a ``center of mass'' $\vec{m}\in\mathbb{R}^d$, which estimates the global minimum. 
In each iteration CMA-ES generates independent and identically distributed (i.i.d.) candidate solutions $\{\vx^{\,i}\}_{i=1}^\lambda$ from a multivariate Gaussian:

\begin{equation}\label{eq:default-update-of-x}
    \vx^{\,i} = \vec{m} + \mathbf{C}^{1/2}\vec{\varepsilon}^{\,i}, \; \vec{\varepsilon}^{\,i}\sim \sigma\N(0, \mathbf{I}),\; i\in[1..\lambda],
\end{equation}
where $\vec{\varepsilon}^{\,i}$ is referred to as the $i$-th mutation vector, $\sigma$ is the step size that scales the mutation vector, and $\mathbf{C}$ is the covariance matrix.
Both $\sigma$ and $\mathbf{C}$ are self-adapted within CMA-ES~\cite{Hansen16a}.
CMA-ES ranks the candidates based on their objective values (with ties broken randomly) as follows: $\loss(\vx^{\,1:\lambda}) < \loss(\vx^{\,2:\lambda}) < \ldots < \loss(\vx^{\,\lambda:\lambda})$. 

The center of mass is then updated using weighted recombination of the top-$\mu$ candidates ($\mu < \lambda$):
\begin{equation} \label{eq:default-update-of-m}
    \vec{m} \leftarrow \vec{m} + \vz, \; \vz=\sum_{i=1}^\mu w_i\mathbf{C}^{1/2}\vec{\varepsilon}\,^{i:\lambda}, \;
    \sum_{i=1}^{\mu} w_i = 1.
\end{equation}
Here, $\vec{\varepsilon}^{\,i:\lambda}$ is the mutation vector that generates $\vx^{\,i:\lambda}$.
By default, CMA-ES uses a monotonically decreasing function w.r.t.~the ranking of these candidates for assigning the weight $w_i$.

For a noisy objective $\mloss$, it is common to re-evaluate each candidate point $\vx$ over $M$ trials and provide CMA-ES with the average function value $\mloss$.
Based on CLT, we approximately have $\mloss(\vx)\sim \loss(\vx) + \mathcal{N}(0,\tau^2/M)$, when $M$ is large.
There are several proposals to extend CMA-ES to minimize noisy functions (see Sec.~\ref{sec:related-work}).

\section{Adaptive Re-evaluation (AR-CMA-ES)}
\label{sec:method}
We summarize our method in Alg.~\ref{alg:cma-es}, where our modifications to the standard CMA-ES are highlighted.
Our method extends the CMA-ES algorithm to handle additive noise, with the primary objective of dynamically estimating the optimal number of function re-evaluations required for each candidate. 
We first modify Eq.~\eqref{eq:default-update-of-m} to consider all mutation vectors:
\begin{equation}\label{eq:update-CMA}
    \vec{z} = \sum_{i=1}^\lambda w_i\mathbf{C}^{1/2}\vec{\varepsilon}^{\,i}, \quad\vepsilon^{\,i} \sim\sigma\mathcal{N}(0, \mathbf{I}),
\end{equation}
where the recombination weights are determined from the noisy function values.
The rationale for this consideration is that taking an average over a larger set helps reduce the impact of noises on the weight.

\paragraph{Modify the search direction $\vec{z}$:}
Instead of using the default weighting scheme, we consider the proportional weights for ease of analysis, a method commonly applied in evolutionary algorithms~\cite{EmmerichS018}.
This approach assigns a positive weight proportional to the loss value of each mutation
\begin{equation} \label{eq:proportional-weight}
    w_i = \frac{\Delta\mloss^i + A}{\sum_{k=1}^\lambda\Delta\mloss^k + \lambda A}, 
\end{equation}
where $\Delta\mloss^i= \mloss(\vec{m}) - \mloss(\vec{m}+ \mathbf{C}^{1/2}\vepsilon^{\, i})$ represents the change in the noisy objective function value, and $A$ is chosen as the smallest possible value that ensures all weights remain positive with high probability.
Considering the first-order Taylor expansion of $\mloss(\vec{m}+ \mathbf{C}^{1/2}\vepsilon^{\, i})$, then,
\begin{align}
    &\Delta\mloss^i = -\left\langle \nabla \loss(\vec{m}),\mathbf{C}^{1/2} \vepsilon^{\,i}\right\rangle + \mathcal{O}\left(\big\|\mathbf{C}^{1/2} \vepsilon^{\, i}\big\|^2_2\right) + \delta^{\,i} \nonumber\\
    &= -\langle \vec{g},\vepsilon^{\,i} \rangle + R \norm{\vepsilon^{\, i}}^2_2 + \delta^{\,i},  \;\text{for some } R \in \mathbb{R},
\end{align}
with $\vec{g} = \mathbf{C}^{1/2}\nabla\loss(\vec{m})$ and $\delta^{i} \sim\mathcal{N}\left(0, \tau^2/M\right)$ being i.i.d. noise in function value, and independent of $\vepsilon\,^i$.

When the step-size $\sigma$ is small, we have $\Delta\mloss^i \sim \sigma\norm{\vec{g}}_2 + \mathcal{N}(0, \tau^2/M)$.
Choosing $A \geq c\tau/\sqrt{M} - \sigma\norm{\vec{g}}_2$ will ensure $\Pr(\Delta\mloss^i + A \leq 0) \leq \Phi(-c)$ (e.g., $c=3$ gives ca. $0.15\%$ chance of realizing negative weights).
Also, since $A$ is a probabilistic upper bound of $\Delta\mloss^i$, we can relax the denominator of Eq.~\eqref{eq:proportional-weight} to $2\lambda A$, which leads to a modified search direction:
\begin{equation} \label{eq:modified-search-direction}
    \vec{z}\,' =  \frac{1}{2\lambda A}\sum_{i=1}^\lambda (\Delta\mloss^i+A)\mathbf{C}^{1/2}\vec{\varepsilon}^{\,i}.
\end{equation}
The modified search direction $\vec{z}\,'$ is easier to analyze and keeps the direction of $\vec{z}$ with high probability:
\begin{equation*} 
    \vec{z}\,' =  \frac{\sum_{k=1}^\lambda\Delta\mloss^k + \lambda A}{2\lambda A}\vec{z} = \left(\frac{1}{2} + \frac{1}{2c}\mathcal{N}(0,1)\right)\vec{z}.
\end{equation*}
The probability that $\vec{z}\,'$ inverts $\vec{z}$ is $1-\Phi(c)$ which is negligible for $c\geq3$. Also, notice that $\E(\norm{\vec{z}'} \mid \vec{z} ) = \frac{1}{2}\norm{z}$. It suffices to halve CMA-ES's parameter in the step-size adaptation to ensure our modification does not affect other dynamics thereof. Hence, we can safely use the modified search direction $\vec{z}\,'$ in the following analysis.

\paragraph{Efficiency in noisy optimization}
For a search algorithm, it is natural to maximize the expected improvement induced by the random search direction $\vec{z}\,'$, i.e., $\E(\loss(\vec{m}) - \loss(\vec{m}+ \vec{z}\,'))$. Since $\vec{z}\,'$ is determined from the noisy function values, the more function re-evaluations ($M$) we use, the more likely $\vec{z}\,'$ would be a descending direction. Hence, in the noisy scenario, it is more sensible to maximize the expected improvement while minimizing $M$, resulting in an \emph{efficiency metric} (similar to the one proposed in~\cite{gu2021adaptive})
\begin{equation}\label{eq:shot-efficiency}
    \gamma = \frac{\E\left[\loss(\vec{m}) - \loss(\vec{m}+ \vec{z}\,')\right]}{M}.
\end{equation}
Given an arbitrary black-box function, it is challenging to compute the exact form $\gamma$. Instead, we seek a lower bound of it and then determine the optimal value of $M$ by maximizing the lower bound.

Firstly, we consider a change of basis of $\mathbb{R}^d$, i.e., $\forall i\in[1..d], \vec{e}_i^{\,\prime} = \mathbf{C}^{1/2}\vec{e}_i$. Note that $\Delta\mloss$ is not affected by the change of basis. In the new coordinate system, the search direction is:
\begin{align}\label{eq:basis-change}
    &\vec{v}\,'= \mathbf{C}^{-1/2}\vec{z}\,' = \frac{1}{2\lambda A}\sum_{i=1}^\lambda \underbrace{(\Delta\mloss^i + A) \vepsilon^{\,i}}_{\vec{v}^{\,i}}
\end{align}
We can obtain the first moment and the second moment of the component of $\vec{v}^{\,i}$  (see Appendix~\ref{sec:indiv_components} for the details).
For $k\in[1..d]$ we have:
\begin{align}
    \E\left[v^i_k\right] &= -g_k\sigma^2 \label{eq:first-moment} \\
    \!\!\!\!\E\left[(v^i_k)^2\right] &= \frac{\tau^2\sigma^2}{M} + \left(\norm{\vec{g}}^2_2 + 2g_k^2\right)\sigma^4 + A^2\sigma^2 \label{eq:second-moment}
\end{align}
where $v^i_k$ and $g_k$ are the $k$-th component of $\vec{v}^{\,i}$ and $\vec{g}$, respectively.

Using the above statistical property of $\vec{v}\,'$ and quadratic upper bound of the loss function (see Thm.~\ref{th:QUB}), we bound from below the expected improvement (see Appendix \ref{sec:lower-bound-improvement} for the derivation):
\begin{align}
    &\E\left[\loss(\vec{m}) - \loss(\vec{m} + \vec{z}\,') \right] \nonumber\\
   	& = \E\left[\loss(\vec{m}) - \loss(\vec{m} + \mathbf{C}^{1/2}\vec{v}\,') \right] \label{eq:EVG1} \\
    &\geq \E\left[-\left\langle \vec{g}, \vec{v}\,'\right\rangle - \frac{K}{2}\norm{\mathbf{C}^{1/2}\vec{v}\,'}^2_2\right] \label{eq:EVG2}\\
    &\geq -\E\left\langle\vec{g}, \vec{v}\,'\right\rangle - \frac{Ks_{\text{max}}}{2}\E\norm{\vec{v}\,'}^2_2 \label{eq:EVG4}\\
    &= \frac{\sigma^2}{2A}\norm{\vec{g}}^2_2 - \frac{\sigma^4(\lambda+d+1)Ks_{\text{max}}}{8\lambda A^2} \norm{\vec{g}}^2_2 - \frac{dKs_{\text{max}}\sigma^2}{8\lambda } \nonumber\\
    &-\frac{1}{M}\frac{\sigma^2dKs_{\text{max}}\tau^2}{8\lambda A^2}  \label{eq:lower-bound-improvement}
\end{align}
where $s_{\text{max}}$ is the largest eigenvalue of $\mathbf{C}$ and $K$ is the Lipschitz constant of $\nabla\loss$.
\begin{remark}
    From Eq.~\eqref{eq:EVG1} to \eqref{eq:EVG2}, we use the quadratic upper bound of real-analytic functions (see Theorem \ref{th:QUB}). From Eq.~\eqref{eq:EVG2} to \eqref{eq:EVG4}, we take the fact that $\norm{\mathbf{C}^{1/2}\vec{v}\,'}_2 \leq \norm{\mathbf{C}^{1/2}}_2\norm{\vec{v}\,}_2$, and $\norm{\mathbf{C}^{1/2}}_2 = \sqrt{s_{\text{max}}}$. The derivation of Eq.~\eqref{eq:lower-bound-improvement} from \eqref{eq:EVG4} requires Eq.~\eqref{eq:first-moment} and Eq.~\eqref{eq:second-moment}.
\end{remark}
Consequently, we obtain a lower bound on the efficiency:
\begin{equation} \label{eq:efficiency-lower-bound}
	\gamma \ge \frac{\sigma^2}{2A^2} \left(a M^{-2} + b M^{-1}\right),
\end{equation}
where, 
\begin{align*}
	 &a = \frac{dKs_{\text{max}}\tau^2}{4\lambda },\\
	 &b = \left(A - \frac{\sigma^2(\lambda+d+1)Ks_{\text{max}}}{4\lambda}\right)\norm{\vec{g}}^2_2 - \frac{A^2dKs_{\text{max}}}{4\lambda}.
\end{align*}
Eq.~\eqref{eq:efficiency-lower-bound} is quadratic function of $M^{-1}$. Obviously, $a>0$. If term $b>0$, then there is a unique maximizer thereof in $[0, \infty)$: 
\begin{equation} \label{eq:optimal-M}
    M^{*} = -\frac{2a}{b}.
\end{equation}

\begin{remark}
    In practice, we notice that the optimal value calculated in Eq.~\eqref{eq:optimal-M} is prone to numerical instability. Therefore, we apply exponential smoothing to $M^*$ in each iteration:
    \begin{equation*}
         M_{t} = (1-\beta)M_{t-1} + \beta M^{*}, \; \beta \in (0,1).
    \end{equation*}
    The initial value $M_0$ should be small and specified by the user. We take $M_t$ re-evaluations for each candidate solution in iteration $t$. Whenever $b<0$, $M^*$ is negative, we simply ignore $M^*$, pause the above smoothing operation, and use the $M_t$ from the last iteration for the re-evaluation.
\end{remark}
We further validate the theoretical lower bound of $\gamma$ on the 10-dimensional noisy sphere function with $\tau=1, \lambda=20$: we measure, for a range of different re-evaluation number $M$, the empirical improvement over $M$ from 50 independent simulations of the mutation at iteration 100 (or any other iterations in the convergent phase). We show the result in Fig.~\ref{fig:empirical-lower-bound}, which numerically validates the correctness of the lower bound and, more importantly, shows that the lower bound curve resembles the trend of the empirical one. As a result, the optimal re-evaluations $M^*$ (green star) upper-bounds the optimum estimated from the empirical curve (red star). \\
There are a few unknown parameters needed in the lower bound (Eq.~\eqref{eq:efficiency-lower-bound}). We discuss how to estimate those as follows. 

\paragraph{Estimate the Lipschitz constant for $\nabla\loss$:}
Lipschitz constant estimation (Lipschitz learning algorithms) is an active research topic~\cite{GonzalezDHL16,StronginBB19,HuangRC23}, and we have no intention of developing new estimation methods in this work. Instead, we propose a feasible solution based on fitting a local Gaussian process regression to the population. For black-box problems, we employ a similar estimation method as in~\cite{GonzalezDHL16}: we fit a local Gaussian process model to the population $\{(\vx\,^i, \mloss(\vx\,^i))\}_{i=1}^\lambda$, specified by zero prior mean function and Gaussian kernel with white noise to handle the noisy function value: $k(\vx, \vx') = \exp(-\theta \norm{\vx - \vx'}^2) + \tau/\sqrt{M} \mathds{1}_{\{\vx'\}}(\vx)$. Let $H(\vx)$ be the Hessian matrix of the posterior mean function at point $\vx$ and $\mathcal{C}$ denote the convex hull of $\{\vx\,^i\}_i$, we can show $\widehat{K} = \max_{\vx \in \mathcal{C}} \norm{H(\vx)}_2$ is a valid estimate to Lipschitz constant of $\nabla\loss$ restricted to $\mathcal{C}$: Applying the mean-value theorem, we have 
$$
    \norm{\nabla\loss(\vx) - \nabla\loss(\vx')}_2 = \norm{H(\vec{z})(\vx - \vx')} \leq \norm{H(\vec{z})}_2 \norm{\vx - \vx'}_2,
$$ 
where $H(\vec{z})$ is the Hessian at $\vec{z}$ and $\vec{z} = (1-t)\vx + t\vx'$, for some $t \in (0, 1)$. We have, for all $\vx,\vx'\in\mathcal{C}, \norm{\nabla\loss(\vx) - \nabla\loss(\vx')}_2 \leq \max\limits_{\vz\in \mathcal{C}} \norm{H(\vz)}_2 \norm{\vx - \vx'}_2$, suggesting $\max\limits_{\vz\in \mathcal{C}} \norm{H(\vz)}_2$ can serve as the Lipschitz constant estimation.
To efficiently compute $\widehat{K}$, we approximately solve the above maximization problem by sampling $100d$ points u.a.r.~in $\mathcal{C}$.

\paragraph{Estimate the noise $\tau$:}
Since we assume homogeneous additive noise, it suffices to calculate the unbiased sample standard deviation $\hat{s}(M)$ of the function value at a randomly chosen point for various values of $M$ before invoking CMA-ES. Using the relationship $\E(\hat{s}(M)) = \tau / \sqrt{M}$, a simple curve-fitting of $\hat{s}(M)$ can provide a robust estimate for $\tau$.

\paragraph{Estimate $\vec{g}$:}
In Eq.~\eqref{eq:first-moment} implies that the mutation vectors are unbiased estimators of the gradient: $\vec{g} = -\E(\vec{v}^{\, i})/\sigma^2
$ for $i\in[1..\lambda]$. We further reduce the variance of this estimator by averaging over all candidates, i.e., $  \vec{g}^{\,*} = -\lambda^{-1}\sum_{i=1}^\lambda \vec{v}^{\, i} / \sigma^2$. Taking Eq.~\eqref{eq:second-moment}, we have the variance of the estimate: $\operatorname{Var}(g_k) = (2/\lambda - 1)g_k^2 + \norm{\vec{g}}^2/\lambda + (\tau^2/M + A^2)/\lambda\sigma^2, k\in[1..d]$. Hence, the variance is small either the population size is large or $\norm{\vec{g}}$ is small, which happens when CMA-ES approaches a local minimum ($\vec{g} = 0$).
For the sake of numerical stability, we exponentially smooth $\vec{g}^{\,*} $ values in the past: $\vec{g} \xleftarrow[]{} (1-\alpha)\vec{g} + \alpha \vec{g}^{\,*}, \alpha \in (0,1)$.

\begin{figure}[t]
    \centering
    \includegraphics[width=1.0\linewidth, trim=2mm 2mm 0mm 0mm,clip]{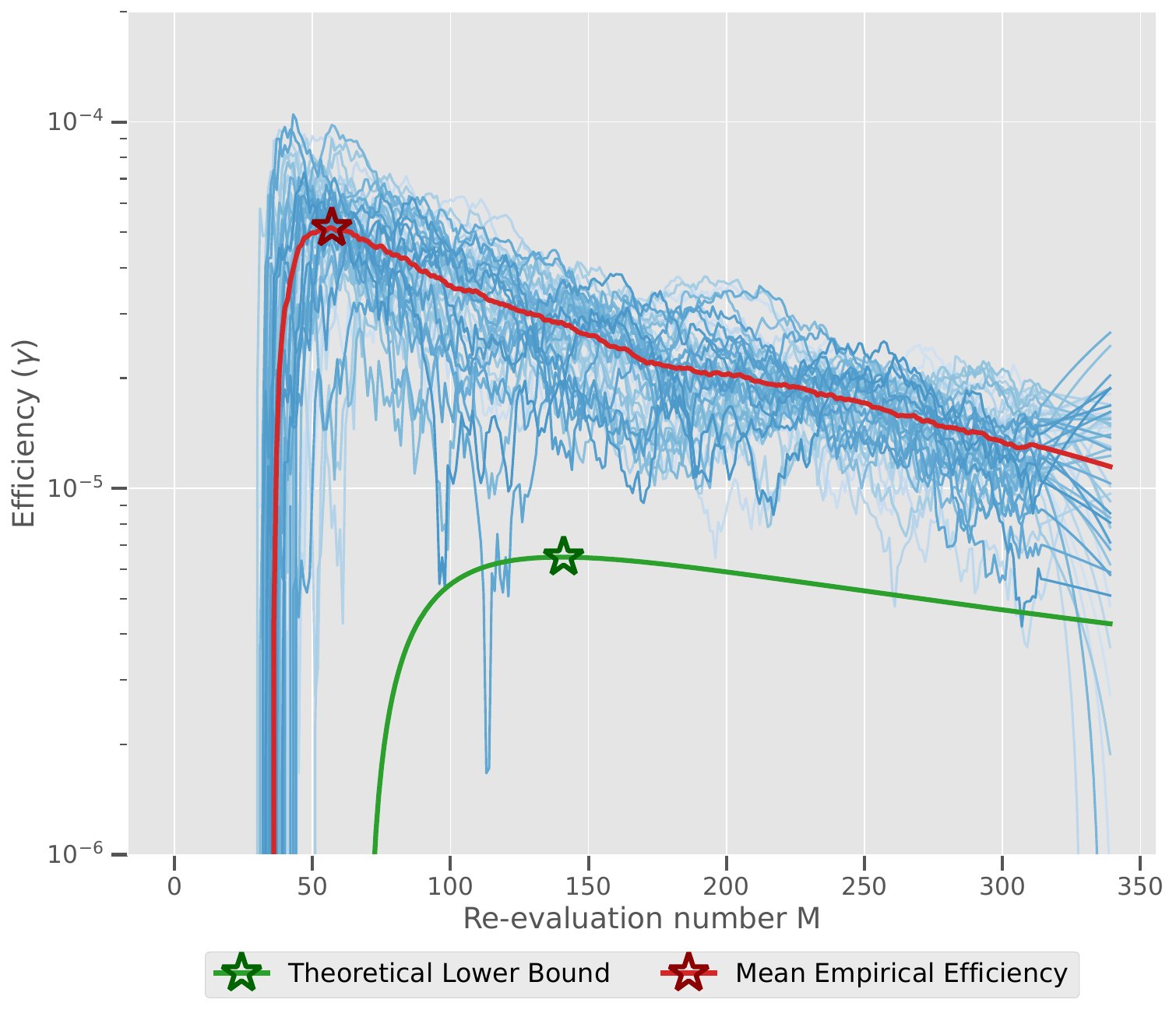}
    \caption{
    On the sphere function, the theoretical lower bound of the efficiency (green curve) and the empirical efficiency curve (red curve), i.e., the empirical improvement over the re-evaluation number, estimated from 50 independent simulations of the mutation of CMA-ES at iteration 100. Each simulated result is shown in light blue curves. We depict, in the star symbol, the maximum of both empirical and theoretical curves.}
    \label{fig:empirical-lower-bound}
\end{figure}
\paragraph{Time complexity:}
Our method incurs small time complexity in addition to the standard CMA-ES: Eq.~\eqref{eq:optimal-M} only involves a constant number of arithmetic operations; the largest eigenvalue $s_{\text{max}}$ of $\mathbf{C}$ is provided internally by the standard CMA-ES. It takes $\mathcal{O}(\lambda)$ time to estimate $\vec{g}$ and takes $\mathcal{O}(1)$ to estimate the noise level $\tau$ since the latter is only executed once. The Lipschitz estimation takes $\mathcal{O}(\lambda^3)$ time to fit the Gaussian process and $\mathcal{O}(\lambda d^2)$ to compute $\norm{H(\vx)}_1$ (the Hessian of the posterior mean function). Since, in practice, the population size is small - typically $\lambda\in\Theta(\log d)$, the actual CPU time used in Lipschitz estimation is marginal.

\renewcommand{\arraystretch}{1.2}
\begin{table*}[ht]
\small 
\caption{Numerical verification of AR-CMA-ES against CMA-ES for $d=20$ on noiseless test functions \(\tau^2 = 0\) and functions with low noise levels \(\tau^2 = 1\). 
For each algorithm, we report the mean and standard error of the final noiseless precision achieved over $20$ runs. 
Different evaluation budgets were used for noisy and noiseless settings--please refer to the Results section for further details.\label{table:ar-cma-to-cma}}
\centering

\begin{tabular}{|l|c|c|c|c|}
\hline
\textbf{Problem} & \textbf{AR-CMA-ES ($\tau^2 = 0$)} & \textbf{CMA-ES ($\tau^2 = 0$)} & \textbf{AR-CMA-ES ($\tau^2 = 1$)} & \textbf{CMA-ES ($\tau^2 = 1$)} \\ \hline
Sphere & $1.24 \times 10^{-5} \pm 6.86 \times 10^{-6}$ & $1.80 \times 10^{-6} \pm 1.28 \times 10^{-6}$ & $1.06 \times 10^{-7} \pm 5.63 \times 10^{-8}$ & $3.40 \times 10^{-5} \pm 2.01 \times 10^{-5}$ \\ \hline
Ellipsoid & $1.31 \times 10^{-3} \pm 8.09 \times 10^{-4}$ & $4.03 \times 10^{-4} \pm 3.21 \times 10^{-4}$ & $9.84 \times 10^{-8} \pm 5.85 \times 10^{-8}$ & $6.53 \times 10^{-6} \pm 3.63 \times 10^{-6}$ \\ \hline
HyperEllipsoid & $3.11 \times 10^{-4} \pm 2.52 \times 10^{-4}$ & $3.84 \times 10^{-5} \pm 3.21 \times 10^{-5}$ & $8.36 \times 10^{-8} \pm 3.56 \times 10^{-8}$ & $5.95 \times 10^{-6} \pm 2.44 \times 10^{-6}$ \\ \hline
RotatedEllipsoid & $1.60 \times 10^{-3} \pm 1.32 \times 10^{-3}$ & $5.40 \times 10^{-4} \pm 3.82 \times 10^{-4}$ & $9.21 \times 10^{-8} \pm 4.74 \times 10^{-8}$ & $6.63 \times 10^{-6} \pm 4.00 \times 10^{-6}$ \\ \hline
RotatedHyperEllipsoid & $3.27 \times 10^{-4} \pm 2.59 \times 10^{-4}$ & $5.63 \times 10^{-5} \pm 3.85 \times 10^{-5}$ & $8.75 \times 10^{-8}\pm 3.71 \times 10^{-8}$ & $5.42 \times 10^{-6} \pm 2.14 \times 10^{-6}$ \\ \hline
Rastingrin & $1.53 \times 10^{2} \pm 4.65 \times 10^{1}$ & $1.59 \times 10^{2} \pm 5.59 \times 10^{1}$ & $ 1.24 \pm 0.85 $ & $ 72.10 \pm 38.80$ \\ \hline
Trid & $1.46 \times 10^{-11} \pm 1.49 \times 10^{-11}$ & $1.46 \times 10^{-11} \pm 1.77 \times 10^{-11}$ & $1.46 \times 10^{-7} \pm 8.11 \times 10^{-8}$ & $2.74 \times 10^{-5} \pm 1.18 \times 10^{-5}$ \\ \hline
Bohachevsky & $2.29 \times 10^{-2} \pm 2.44 \times 10^{-2}$ & $3.89 \times 10^{-3} \pm 4.05 \times 10^{-3}$ & $2.30 \times 10^{-7} \pm 3.71 \times 10^{-7}$ & $6.15 \times 10^{-6} \pm 3.02 \times 10^{-6}$ \\ \hline
CosineMixture & $9.14 \times 10^{-7} \pm 1.27 \times 10^{-6}$ & $1.56 \times 10^{-7} \pm 1.49 \times 10^{-7}$ & $9.25 \times 10^{-8} \pm 4.15 \times 10^{-8}$ & $1.98 \times 10^{-5} \pm 5.82 \times 10^{-5}$ \\ \hline
Schwefel02 & $3.96 \times 10^{-3} \pm 2.95 \times 10^{-3}$ & $1.68 \times 10^{-3} \pm 1.41 \times 10^{-3}$ & $7.60 \times 10^{-8} \pm 2.70 \times 10^{-8}$ & $4.00 \times 10^{-3} \pm 1.79 \times 10^{-2}$ \\ \hline
\end{tabular}
\end{table*}

\begin{figure*}[!ht]
	\centering
        \includegraphics[width=0.85\linewidth]{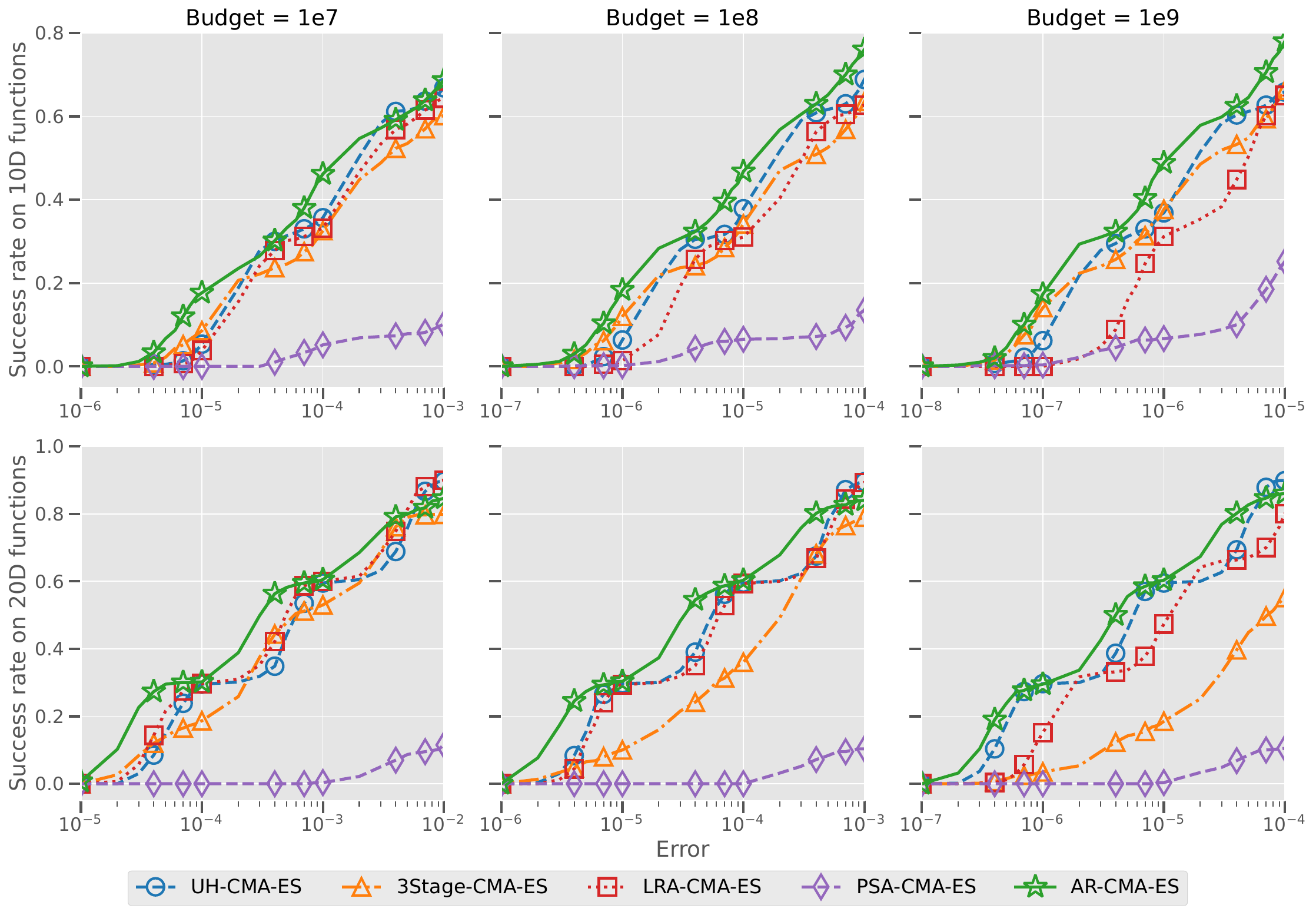}
        \caption{Empirical cumulative distribution functions (ECDFs) of the error ($\loss(\vec{m}) - \loss^*$) obtained from all 20 independent runs, three noise levels, and on all test functions. Top row: $d=10$; bottom: $d=20$. Three columns from left to right correspond to an evaluation budget of $10^7$, $10^8$, and $10^9$, respectively.}
	\label{fig:all10_20_ecdf}
\end{figure*}

\section{Related works} \label{sec:related-work}
\paragraph{Three-Stage CMA-ES:}
The authors in~\cite{PhysRevB.102.235122} propose a static schedule that divides the optimization process into three distinct stages, with the number of re-evaluations increasing ten-fold at each stage.
For example, with a budget of $10^7$ function re-evaluations, the method allocates $M_1 = 100, M_2 = 1\,000, M_3 = 10\,000$, and keeps a fixed ratio of 10:3:1 among the total function evaluations in three stages. Such a setup results in evaluations of approximately 7\,150, 2\,145, and 715 candidates at each stage, respectively. 
Despite its simplicity, this method has been shown to work well on quantum chemistry problems~\cite{PhysRevB.102.235122,bonet2023performance}. However, this method may not be as effective for other problems, as the fixed number of re-evaluations might either fall short or be excessive, potentially slowing down the convergence rate of CMA-ES.

\paragraph{Uncertainty handling CMA-ES:}
The Uncertainty handling CMA-ES(UH-CMA-ES) introduced in ref.~\cite{hansen2008method} presents an adaptive strategy that increases the re-evaluation number $M$ if significant ranking changes occur for some candidates when their noisy function values are recomputed with the current $M$.
Specifically, after evaluating each point in the population $\{\vx\,^i\}_i$ with $M$ re-evaluations, a random sub-population is selected to re-estimate the function values.
The entire population is then reordered based on these updated noisy values, and the ranking changes for each $\vx\,^i$ are compared before and after re-estimation.
UH-CMA-ES aggregates these rank changes across all candidates to determine whether $M$ should be adjusted. If the indicator is positive, $M$ is increased multiplicatively; otherwise, it stays the same.

\paragraph{Population Size Adaptation CMA-ES:}
The population size adaptation in CMA-ES builds on the Information Geometric Optimization (IGO) framework ~\cite{nishida2016population}, where population size is treated as the number of Monte Carlo samples used to estimate the natural gradient. This approach reduces gradient variability in multimodal and noisy functions compared to unimodal ones.

Applied to CMA-ES, population size adjusts based on the length of the evolution path ~\cite{NishidaA18}. A normalization factor ensures accurate parameter update assessment, while step-size adjustment maintains stability during population changes. When updates lack precision, population size increases to enhance exploration; when precision is sufficient, it decreases to focus on exploitation. This dynamic balance between exploration and exploitation improves performance in noisy and multimodal environments.

\paragraph{Learning Rate Adaptation CMA-ES:}
The so-called Learning Rate Adaptation CMA-Es (LRA-CMA-ES) presented in ref.~\cite{nomura2023cma} introduces a dynamic adjustment of the learning rates ($\eta^t_m$ and $\eta^t_{\mathbf{C}}$) on a per-iteration basis.
Effectively, such adaptation translates into tuning the updates $\Delta^t_m$  and $\Delta^t_{\mathbf{C}}$.
As such, the updating rules of the center of mass and the covariance matrix are $m^{t+1}=  m^t + \eta^t_m \Delta^t_m$  and $\mathbf{C}^{t+1}=  \mathbf{C}^t + \eta^t_{\mathbf{C}} \Delta^t_{\mathbf{C}}$.
It estimates the signal-to-noise ratio as the fraction between the expected value of the updating vector and its variance.
The adaptive learning rate mechanism seeks to maintain a constant signal-to-noise ratio (SNR) provided as a hyperparameter.
Thus, when the empirical SNR is higher than the provided constant, the learning rate is increased, and when it is lower, the learning rate is reduced.

\begin{table*}[ht]
\caption{Wall-clock time comparison for a budget of $10^7$ re-evaluations on $d=20$, averaged over 20 runs (unit: secs) using an Intel Core i7-9750H processor (6 CPU cores) and 32GB RAM.}
\label{tab:performance_comparison}
\centering
\begin{tabular}{|l|c|c|c|c|c|c|}
\hline
\textbf{Problem} & \textbf{AR-CMA-ES} & \textbf{CMA-ES} & \textbf{3-Stage-CMA-ES} & \textbf{UH-CMA-ES} & \textbf{LRA-CMA-ES} & \textbf{PSA-CMA-ES} \\ \hline
Sphere & $16.2 \pm 0.44$ & $22.7 \pm 0.80$ & $17.8 \pm 0.83$ & $19.6 \pm 0.81$ & $24.6 \pm 2.78$ & $25.4 \pm 3.65$ \\ \hline
Ellipsoid & $16.1 \pm 0.31$ & $29.4 \pm 3.76$ & $18.5 \pm 1.85$ & $21.1 \pm 1.69$ & $41.7 \pm 2.08$ & $36.9 \pm 4.66$ \\ \hline
HyperEllipsoid & $16.0 \pm 0.00$ & $28.4 \pm 4.52$ & $17.9 \pm 0.45$ & $20.2 \pm 0.95$ & $34.9 \pm 2.10$ & $29.8 \pm 5.60$ \\ \hline
RotatedEllipsoid & $16.0 \pm 0.00$ & $37.0 \pm 4.37$ & $18.2 \pm 0.55$ & $21.1 \pm 1.07$ & $46.1 \pm 4.38$ & $38.2 \pm 4.30$ \\ \hline
RotatedHyperEllipsoid & $16.0 \pm 0.00$ & $33.0 \pm 1.57$ & $17.4 \pm 0.49$ & $20.2 \pm 0.89$ & $34.5 \pm 3.30$ & $29.1 \pm 2.55$ \\ \hline
Rastingin & $16.6 \pm 0.69$ & $32.9 \pm 3.30$ & $18.1 \pm 1.43$ & $20.8 \pm 1.67$ & $34.9 \pm 3.48$ & $32.7 \pm 3.25$ \\ \hline
Trid & $18.0 \pm 0.32$ & $23.5 \pm 0.51$ & $18.0 \pm 0.56$ & $23.2 \pm 1.07$ & $34.6 \pm 1.05$ & $27.6 \pm 0.82$ \\ \hline
CosineMixture & $15.8 \pm 0.37$ & $30.4 \pm 3.60$ & $18.1 \pm 0.72$ & $17.1 \pm 0.51$ & $37.0 \pm 4.28$ & $32.3 \pm 9.40$ \\ \hline
Bohachevsky & $17.4 \pm 0.51$ & $38.4 \pm 1.14$ & $19.1 \pm 0.45$ & $22.7 \pm 2.34$ & $52.8 \pm 3.35$ & $45.2 \pm 7.58$ \\ \hline
Schwefel & $16.5 \pm 0.51$ & $26.9 \pm 3.25$ & $17.6 \pm 0.60$ & $20.3 \pm 2.18$ & $33.4 \pm 1.35$ & $26.9 \pm 3.98$ \\ \hline
\end{tabular}
\end{table*}

\section{Experiments} \label{sec:experiments}
\paragraph{Experiments setup:}
We make an empirical comparison of AR-CMA-ES against the most advanced methods: UH-CMA-ES, Three-Stage CMA-ES, PSA-CMA-ES, and LRA-CMA-ES. 
We thoroughly re-implement them by integrating their original source code with the modular CMA-ES~\cite{de2021tuning} framework, also considering the details in the original publication to the best of our ability\footnote{The source code can be accessed at \\
\url{https://anonymous.4open.science/r/ShotFrugal-7CD4}
}.

For the objective functions, we choose ten standard artificial test functions (see Table~\ref{tab:functions} in the Appendix \ref{sec:all_experiments} for their definition).
These test functions encompass a wide range of landscapes, such as unimodal/multi-modal landscapes and dimension-separable and non-separable properties, which are considered difficult for numerical optimization.
To gather statistically relevant data, we will execute 20 independent runs for each test function.
Additionally, we add artificial noise in three levels: $\tau^2 \in \{1, 10, 100\}$. 
To make the comparison as fair as possible, we use the same population size of CMA-ES, $\mu = 50,\lambda = 100$, for all methods; the initial step size is set to $\sigma_0 = 0.1\times \norm{\vx_U - \vx_L}_{\infty}$, where $[\vx_L, \vx_U]\subset \R^d$ is the search space (see Table~\ref{tab:functions} in the appendix for the search space of each function).
For the methods we compare, we leave their remaining hyperparameter settings unchanged from the original publication.

To determine the coefficients $\alpha$ and $\beta$ used in exponential smoothing for our method, we extensively test various combinations of them, which results in setting $\alpha=0.1$ and $\beta=0.1$.
For the value of $A$ in Eq.~\eqref{eq:proportional-weight}, we choose the smallest measured $\Delta\mloss$ value among all candidates in each iteration.

Finally, we test all methods with different budgets of function evaluations, where we recap the re-evaluation number per candidate at $1\%$ of the total budget.

\paragraph{Results:}
First, since we modified CMA-ES's default recombination weights in Eq.~\ref{eq:proportional-weight}, we verify the performance of AR-CMA-ES against CMA-ES in the noiseless ($\tau = 0$) and noisy scenarios ($\tau = 1$). In Table~\ref{table:ar-cma-to-cma}, we see that in the noiseless cases, AR-CMA-ES is quite comparable to CMA-ES while under the additive noise ($\tau = 1$), AR-CMA-ES significantly improves CMA-ES. For the noise-free case, we used a fixed budget of $10^4$ function re-evaluations for each function for both AR-CMA-ES and CMA-ES. An exception was made for the Trid function, which required a larger budget of $10^5$ re-evaluations to be successfully solved. For the noisy case, we allocated a significantly larger budget of $10^9$ function re-evaluations. To ensure a fair comparison, we also allowed standard CMA-ES to re-evaluate each candidate for $10^4$, as it will eventually reach a plateau for any input given candidate no of re-evaluations.

Second, we record the trajectory of the center of mass $\vec{m}$ and compute the corresponding noiseless function values $\loss(\vec{m})$.
Then, we compute the empirical cumulative distribution function (ECDF) of the optimization error $\loss(\vec{m}) - \loss^*$ upon the termination of each method ($\loss^*$ denotes the global optimal) for each combination of $d\in\{10, 20\}$ and evaluation budget in $\{10^7, 10^8, 10^9\}$.
Formally, ECDF of an algorithm is defined as $\operatorname{ECDF}(x) = \sum_{i=1}^{N}\mathds{1}_{[e_i, \infty)}(x) / N$, where $e_i$ is the optimization error observed in the $i$-th run.
We show the main ECDF curves in Fig.~\ref{fig:all10_20_ecdf}, which aggregates over all functions and noise levels.
Also, we included, in the appendix, the ECDFs on each function and noise level (Fig.~\ref{fig:all_functions_general_plots_10D} and~\ref{fig:all_functions_general_plots_20D}).

As we increase the budget and function dimension, and hence the hardness of the optimization task, AR-CMA-ES shows a substantial performance improvement compared to all other methods.
Particularly for relatively higher dimensions ($d=20$), we pointed out that the major benefit of our method lies in increasing the probability of hitting difficult error values quite a bit. 
As an example, with $d=20$ and a budget of $10^7$ function evaluations, AR-CMA-ES can reach an optimization error $\leq 4\times 10^{-5}$ with approximately \(27\%\) probability.
In contrast, for all other methods, the probability drops drastically, UH-CMA-ES: 9\%, Three-Stage-CMA-ES: 12\%, LRA-CMA-ES: 14\%, and PSA-CMA-ES: 0\%.
With a higher budget of $10^9$ function evaluations and \(d=20\), we observe a similar result; as such, our method found around 19\% of solutions with an optimization error $\leq 4\times 10^{-7}$, while  UH-CMA-ES achieved only 10\% and the other methods failed to achieve such threshold. However, we can observe two convergence points for all three budgets where several methods achieve a similar probability of success. With a budget $10^9$ and \(d=20\), we observe that AR-CMA-ES and UH-CMA-ES achieve so probability of success at a precision of $10^{-6}$ (around 30\%) and at a precision of $10^{-5}$ (around 60\%). However, our method still shows a significantly higher cumulative probability at almost all error values.
To see the effect of the noise level on the performance, we show in Fig.~\ref{fig:all_10D_noise} (in Appendix~\ref{sec:all_experiments}) the ECDF curves for each combination of dimensions, budgets, and noise levels. 
As the noise level increases, performance slightly decreases.
This behavior is due to overestimation of $M^*$, as the number of function re-evaluations is linearly dependent on the noise. 

For closer analysis, we showcase the ECDF and empirical convergence curve on the Trid function, which is a non-separable function across dimensions, making it a challenging problem for algorithms.
\begin{figure*}[!t]
	\centering
        \includegraphics[width=0.85\linewidth]{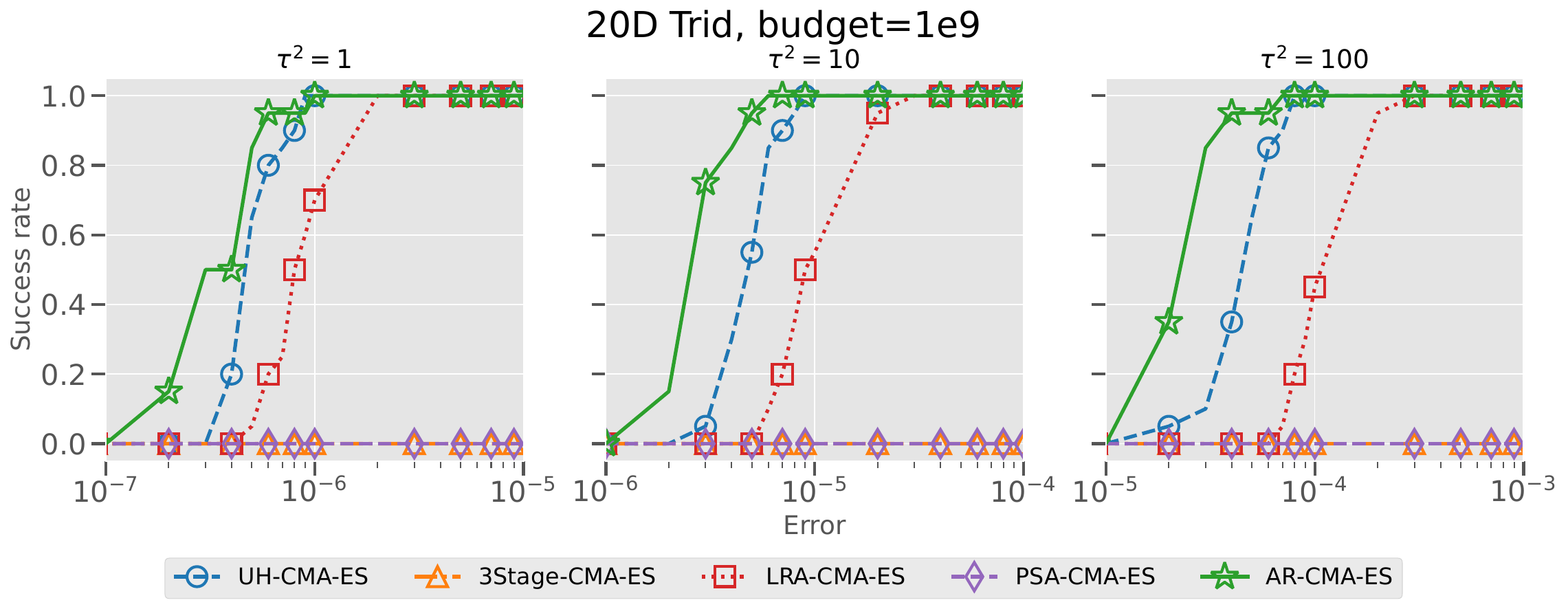}
        \includegraphics[width=0.85\linewidth]{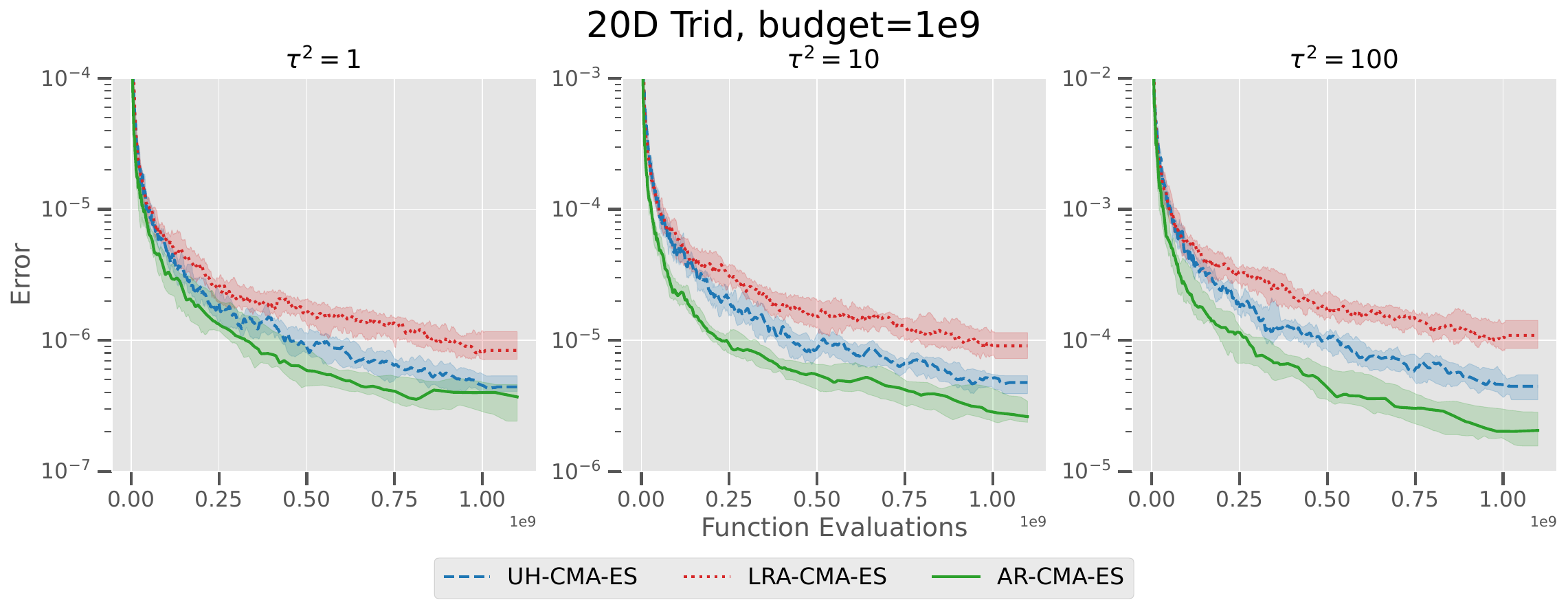}
        \caption{Top: On 20-dimensional Trid function (unimodal and non-separable), the empirical cumulative distribution function (ECDF) of the error ($\loss(\vec{m}) - \loss^*$) obtained with $10^9$ function evaluation budget for three different noise levels ($\tau^2 \in \{1, 10, 100\}$) separately. Bottom: Mean convergence curve - $\log_{10}(\loss(\vec{m}) - \loss^*)$ as a function of function evaluations. We see AR-CMA-ES outperforms other methods, and it is more advantageous when the noise $\tau$ gets larger. We observe that both Three-Stage and PSA-CMA-ES completely failed on this function.}
	\label{fig:trid_20D_ecdf}
\end{figure*}

Fig.~\ref{fig:trid_20D_ecdf} (top) shows the ECDF on a 20-dimensional Trid function with a budget of $10^9$ function evaluations and different noise levels ($\tau^2 \in \{1,10,100\}$). As discussed, the Trid function is non-separable across dimensions (the minimum cannot be found by searching along each dimension separately). We observe that AR-CMA-ES achieves substantial improvement compared to all other methods, while Three-Stage and PSA-CMA-ES failed to hit any small error value, indicated by their flat ECDF curve. In Fig.~\ref{fig:trid_20D_ecdf} (bottom), we draw the convergence curves - $\log_{10}(\loss(\vec{m}) - \loss^*)$ as a function of function evaluations. We see that AR-CMA-ES delivers a significantly steeper convergence than UH- and LRA-CMA-ES.

\paragraph{Performance on non-Lipschitz continuous functions} We also perform experiments with AR-CMA-ES on four ten-dimensional benchmark functions that are not Lipschitz continuous (see~\cref{tab:non-smooth-functions}) to assess how its performance is affected when its core theoretical assumption does not hold.
First, we compare the performance of AR-CMA-ES to CMA-ES in a noiseless setting, as shown in~\cref{tab:non-smooth-noiseless}. 
The results indicate that AR-CMA-ES performs comparably to CMA-ES, with no statistically significant difference on most problems--except for the  Sum Absolute function, where its performance is off by an order of magnitude.
Next, we evaluate all relevant algorithms under a moderate noise level of $\tau^2 = 1$.
As summarized in~\cref{tab:non-smooth-functions}, AR-CMA-ES consistently outperforms CMA-ES on all but one benchmark--the non-smooth version of the Griewank function--where its performance drops by two orders of magnitude.
Upon further analysis, we found this discrepancy to stem from inaccurate estimation of the Lipschitz constant, which is, in fact, ill-defined for this function.
Compared to other algorithms, AR-CMA-ES ranks second on the Sum Absolute function (slightly behind UH-CMA-ES) and shows statistically indifferent results on the Nesterov F1 and F2 benchmarks.
These results suggest that while AR-CMA-ES is designed with Lipschitz continuity in mind, it remains competitive even for non-smooth benchmark functions. 
Please refer to Appendix~\ref{sec:nonsmoothresults} for more details.


\section{Conclusion}
In this paper, we propose AR-CMA-ES, a novel noise-handling method for the famous CMA-ES algorithm under additive Gaussian white noise. We consider the expected improvement of the noiseless function value in one iteration of CMA-ES and derive a lower bound on it, provided the noise level and the Lipschitz constant of the function's gradient. Normalizing the lower bound by the re-evaluation number gives us an efficiency metric. Solving for the maximum efficiency, we obtain a simple expression of the optimal re-evaluation number.

This adaptive strategy enhances CMA-ES's performance by efficiently allocating function (re)-evaluation without significant computational overheads. AR-CMA-ES substantially outperforms several state-of-the-art noise-handling methods for CMA-ES and demonstrates a consistent advantage across different test functions, search dimensions, and noise levels. While AR-CMA-ES demonstrates significant improvements in handling additive noise, it exhibits the following limitations:
\begin{itemize}
    \item \textbf{Assumptions on noise characteristics}: AR-CMA-ES is designed with a focus on additive noise. If the noise characteristics deviate from this assumption, such as multiplicative noise or other forms of complex noise patterns, the derived expression might not hold any longer. Further research is needed to extend the method to handle a broader range of noise types effectively.
    \item \textbf{Impact of noise level}: The number of function re-evaluations in AR-CMA-ES is linearly dependent on the noise level $\tau$. As the noise level increases, this dependency can lead to a huge re-evaluation number, which might not be the best choice in high-noise environments.
    \item \textbf{Limited empirical validation}: While AR-CMA-ES demonstrates performance benefits on artificial test functions, its effectiveness on real-world problems remains to be fully explored. The empirical validation primarily focuses on synthetic functions that adhere to the assumptions about the function and noise type. Further experimentation is needed to evaluate the method's performance on functions that naturally conform to these assumptions. Examples include quantum loss functions, which are prevalent in quantum computing optimization tasks. Extending the empirical validation to encompass a broader range of real-world problems will provide deeper insights into the method's applicability and effectiveness in practical scenarios.
\end{itemize}
For future works, we will focus on addressing the above limitations and testing them on real-world optimization problems.

\section*{Acknowledgments}
The authors acknowledge support from all members of the applied Quantum algorithms (aQa) group at Leiden University.
YJP is supported by the `Quantum Inspire--the Dutch Quantum Computer in the Cloud' project [NWA.1292.19.194] of the NWA research program 'Research on Routes by Consortia (ORC)', funded by Netherlands Organization for Scientific Research (NWO).

\bibliographystyle{ACM-Reference-Format}
\bibliography{gecco-ref}

\clearpage
\appendix

\section{Quadratic Upper Bound}\label{sec:qub}
\begin{theorem}[Quadratic Upper Bound]\label{th:QUB}
    Assume a real-valued function $\loss\colon \R^d \rightarrow \R$ with Lipschitz continuous gradient, i.e., $\norm{\nabla\loss(\vx) - \nabla\loss(\vx')} \leq K\norm{\vx - \vx'}$ for all $\vx,\vx'\in\R^d$. The following upper bound holds: $\forall\vx, \vec{y}\in\R^d$,
    \begin{equation*}
         \loss(\vec{y}) \leq \loss(\vx) + \left\langle\nabla \loss(\vx), \vec{y}-\vx \right\rangle + \frac{K}{2}\norm{\vec{y}-\vx}^2_2.
    \end{equation*}
\end{theorem}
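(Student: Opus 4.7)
The plan is to prove the quadratic upper bound via the fundamental theorem of calculus applied along the line segment joining $\vx$ and $\vec{y}$, using the Lipschitz assumption on $\nabla\loss$ to control the remainder.

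First, I would define the univariate auxiliary function $\phi(t) = \loss(\vx + t(\vec{y}-\vx))$ for $t \in [0,1]$ and differentiate via the chain rule to obtain $\phi'(t) = \langle \nabla\loss(\vx + t(\vec{y}-\vx)),\, \vec{y}-\vx\rangle$. The fundamental theorem of calculus then gives $\loss(\vec{y}) - \loss(\vx) = \int_0^1 \phi'(t)\,dt$. Splitting the integrand by adding and subtracting $\nabla\loss(\vx)$ yields
\begin{equation*}
\loss(\vec{y}) - \loss(\vx) = \langle \nabla\loss(\vx),\, \vec{y}-\vx\rangle + \int_0^1 \langle \nabla\loss(\vx + t(\vec{y}-\vx)) - \nabla\loss(\vx),\, \vec{y}-\vx\rangle\,dt.
\end{equation*}

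Second, I would bound the remainder integral. By Cauchy–Schwarz, the integrand is at most $\norm{\nabla\loss(\vx + t(\vec{y}-\vx)) - \nabla\loss(\vx)}_2 \cdot \norm{\vec{y}-\vx}_2$, and applying the Lipschitz hypothesis with displacement $t(\vec{y}-\vx)$ gives an upper bound of $K t \norm{\vec{y}-\vx}_2^2$. Integrating over $t\in[0,1]$ produces exactly $\frac{K}{2}\norm{\vec{y}-\vx}_2^2$, which, plugged into the identity above, yields the stated inequality.

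Honestly, there is no serious obstacle here: this is the classical descent lemma, and the only care needed is in justifying the chain rule and interchanging integration with the linear functional $\langle \cdot,\,\vec{y}-\vx\rangle$, both of which are immediate once $\nabla\loss$ is continuous (which is implied by Lipschitz continuity). If one wanted to avoid differentiability assumptions beyond the stated Lipschitz condition, one could instead use the integral representation $\loss(\vec{y}) - \loss(\vx) - \langle \nabla\loss(\vx), \vec{y}-\vx\rangle = \int_0^1 \langle \nabla\loss(\vx + t(\vec{y}-\vx)) - \nabla\loss(\vx),\, \vec{y}-\vx\rangle\,dt$ directly, but the Lipschitz assumption already grants enough regularity for the simpler route above.
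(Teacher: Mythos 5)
Your proof is correct and follows essentially the same route as the paper's: both integrate $\langle\nabla\loss(\vx+t(\vec{y}-\vx)),\vec{y}-\vx\rangle$ over $t\in[0,1]$, split off the $\langle\nabla\loss(\vx),\vec{y}-\vx\rangle$ term, and bound the remainder by Cauchy--Schwarz plus the Lipschitz condition to obtain $\frac{K}{2}\norm{\vec{y}-\vx}_2^2$. No substantive difference.
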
 
\begin{proof}
    Let $\vec{p} = \vec{y} - \vec{x}$.
    By the Taylor theorem, we have:
    \begin{align*}
        &\loss(\vec{y}) - \loss(\vec{x}) = \int_0^1 \left\langle \nabla\loss(\vec{x} + t\vec{p}), \vec{p} \right\rangle  \ud t \\
        &= \int_0^1 \langle \nabla\loss(\vec{x} + t\vec{p}) - \nabla\loss(\vec{x}) , \vec{p}\rangle  \ud t  + \langle \nabla \loss(\vec{x}), \vec{p}\rangle \\
        &\leq \int_0^1\norm{\nabla\loss(\vec{x} + t\vec{p}) - \nabla\loss(\vec{x})}_2 \norm{\vec{p}}_2 \ud t + \langle \nabla f(\vec{x}), \vec{p}\rangle \\
        &\leq \norm{\vec{p}}_2 \int_0^1 K\norm{t\vec{p}}_2 \ud t +  \langle \nabla f(\vec{x}), \vec{p}\rangle \\
        &=\frac{K}{2} \norm{\vec{p}}_2^2 + \langle \nabla f(\vec{x}), \vec{p}\rangle
    \end{align*}
\end{proof}
Applying the above theorem to Eq.~\eqref{eq:basis-change}, we have:
\begin{align}
    &\loss(\vec{m} + \mathbf{C}^{1/2}\vec{v}\,') \nonumber\\
    &\leq \loss(\vec{m}) + \langle\nabla \loss(\vec{m}),\mathbf{C}^{1/2}\vec{v}\,'\rangle + \frac{K}{2}\norm{\mathbf{C}^{1/2}\vec{v}\,'}^2_2 \nonumber \\
    &\loss(\vec{m}) -\loss(\vec{m} + \mathbf{C}^{1/2}\vec{v}\,') \nonumber\\
    &\geq - \langle\vec{g},\vec{v}\,'\rangle 
    - \frac{K}{2}\norm{\mathbf{C}^{1/2}\vec{v}\,'}^2_2, \label{eq:lower_bound}
\end{align}
where $\vec{g} = \mathbf{C}^{1/2}\nabla \loss(\vec{m})$.

\section{Statistical moments of $\vec{v}^{\,i}$} \label{sec:indiv_components}
Assuming $\vec{v}^{\,i} = (\Delta\mloss^i + A)\vepsilon^{\,i}$, the individual component of it can be expressed as: for $k \in[1..d]$,
\begin{align}
    v^{i}_{k} &= (\Delta\mloss^i+A)\varepsilon^i_k \nonumber\\
              &= \left[ -\left\langle \vec{g}, \vepsilon^{\,i}\right\rangle + \delta^{i} + R \norm{\vepsilon^{\,i}}^2_2 + A\right]\varepsilon^{i}_k
\end{align}
where $\vec{\varepsilon}\,^1, \ldots, \vec{\varepsilon}\,^i, \ldots, \vec{\varepsilon}\,^\lambda \sim \sigma\mathcal{N}(0,\mathbf{I})$ are i.i.d., $\delta^i \sim \mathcal{N}(0,\tau^2/M)$, $R \in \mathbb{R}$, and $\delta^i$ is independent of $\{\vec{\varepsilon}\,^i\}_i$.

\paragraph{Proof of~\cref{eq:first-moment}} The first moment of each individual component is given by:
\begin{align}
    &\E\left[v^i_k\right] \nonumber\\
    &=\E\left[-\langle \vec{g},\vepsilon^{\,i}\rangle\varepsilon^i_k + \delta^i\varepsilon^i_k + R\norm{\vepsilon^{\,i}}^2_2\varepsilon^i_k + A \varepsilon^i_k\right] \nonumber\\
    &=-\underbrace{\E\left[\langle \vec{g},\vepsilon^{\,i}\rangle\varepsilon^i_k\right]}_{A_1} 
    + \underbrace{\E\left[\delta^i\varepsilon^i_k\right]}_{A_2}+R\underbrace{\E\left[\norm{\vepsilon^{\,i}}^2_2\varepsilon^i_k\right]}_{A_3} \nonumber\\
    &+A\underbrace{\E\left[\varepsilon^i_k\right]}_{A_4=0}\label{eq:first_moment}
\end{align}
We simplify each term $A_1,A_2$, and $A_3$:
\begin{align}
    A_1 &= \E\left[\langle \vec{g},\vepsilon^{\,i}\rangle\varepsilon^i_k\right] = \E\left(\sum_{j=1}^d g_j\varepsilon^i_j  \varepsilon^i_k \right) \nonumber \\
        &= g_k\E( \varepsilon^i_k)^2 
        + \sum_{j\neq k}^d g_j\E\left[\varepsilon^i_j\right]\E\left[  \varepsilon^i_k  \right] =g_k\sigma^2
        \label{eq:fm_A1f}\\
    A_2 &= \E\left[\delta^i\varepsilon^i_k\right] =\E\left[\delta^i\right]\E\left[\varepsilon^i_k\right] = 0 \label{eq:fm_A2f} \\
    A_3 &= \E\left[\norm{\vepsilon^{\,i}}^2_2\varepsilon^i_k\right] 
    = \E\left(\sum_{j=1}^{d}(\varepsilon^i_j)^2\varepsilon^i_k\right) \nonumber\\
    &= \E\left[(\varepsilon^i_k)^3\right] 
    + \sum_{j\neq k}^{d}\E\left[(\varepsilon^i_j)^2\right]\E\left[\varepsilon^i_k\right] = 0 \label{eq:fm_A3f}
\end{align}

Substituting~\cref{eq:fm_A1f,eq:fm_A2f,eq:fm_A3f} in \cref{eq:first_moment}, we have the first moment of $v^i_k$:
\begin{equation}\label{eq:final_first_moment}
    \E\left[v^i_k\right] = -g_k\sigma^2 
\end{equation}

\paragraph{Proof of~\cref{eq:second-moment}}
The second moment reads:
\begin{align}
    &\E\left[(v^i_k)^2\right] \nonumber\\
    &=\ \E\left[\left(-\langle \vec{g},\vepsilon^{\,i}\rangle + \delta^{i} + R\norm{\vepsilon^{\,i}}^2_2 + A \right)^2 (\varepsilon^i_k)^2\right] \nonumber \\
    &=\ \underbrace{\E\left[\langle \vec{g},\vepsilon^{\,i}\rangle^2(\varepsilon^i_k)^2\right]}_{B_1}
    + \underbrace{\E\left[(\delta^{i})^2(\varepsilon^i_k)^2\right]}_{B_2} \nonumber \\
    &+ R^2 \underbrace{\E\left[\norm{\vepsilon^{\,i}}^4_2(\varepsilon^i_k)^2\right]}_{B_3} 
    + A^2 \underbrace{\E\left[(\varepsilon^i_k)^2\right]}_{B_4=\sigma^2} \nonumber \\
    &- 2 \underbrace{\E\left[\langle \vec{g},\vepsilon^{\,i}\rangle\delta^{i} (\varepsilon^i_k)^2\right]}_{B_5} 
    - 2R \underbrace{\E\left[\langle \vec{g},\vepsilon^{\,i}\rangle\norm{\vepsilon^{\,i}}^2_2 (\varepsilon^i_k)^2\right]}_{B_6} \nonumber \\
    &+ 2R \underbrace{\E\left[\delta^{i} \norm{\vepsilon^{\,i}}^2_2(\varepsilon^i_k)^2\right]}_{B_7} 
    - 2A \underbrace{\E\left[\langle \vec{g},\vepsilon^{\,i}\rangle(\varepsilon^i_k)^2\right]}_{B_8} \nonumber \\
    &+ 2A \underbrace{\E\left[\delta^i(\varepsilon^i_k)^2\right]}_{B_9} 
    + 2AR \underbrace{\E\left[\norm{\vepsilon^{\,i}}_2^2(\varepsilon^i_k)^2\right]}_{B_{10}} \label{eq:second-moment-expansion}
\end{align}

We simplify each above term:
\begin{align}
    B_1
    &=\E\left(\sum_{i,j=1}^d g_j g_l \varepsilon^i_j \varepsilon^i_{l} (\varepsilon^i_k)^2\right) \nonumber \\
    &= \sum_{j\neq k}^d\sum_{l\neq j,k}^d g_j g_l \E\left[\varepsilon^i_j\right] \E\left[\varepsilon^i_{l}\right] \E\left[(\varepsilon^i_k)^2\right] \nonumber \\
    &\quad + 2\sum_{j\neq k}^d g_j g_k \E\left[\varepsilon^i_j\right] \E\left[(\varepsilon^i_k)^3\right] \nonumber \\
    &\quad + \sum_{j\neq k}^d g_j^2 \E\left[(\varepsilon^i_j)^2\right] \E\left[(\varepsilon^i_k)^2\right] 
    + g_k^2 \E\left[(\varepsilon^i_k)^4\right] \nonumber \\
    &= \left(\norm{\vec{g}}_2^2 + 2 g_k^2\right)\sigma^4 \label{eq:sm_B1f}\\
    B_2 &=  \E\left[(\delta^{i})^2\right] \E\left[(\varepsilon^i_k)^2\right] = \frac{\tau^2\sigma^2 }{M}\label{eq:sm_B2f} \\
    B_3 &= 
    \E\left(\sum_{j=1}^d\sum_{l=1}^d (\varepsilon^i_j)^2 (\varepsilon^i_l)^2 (\varepsilon^i_k)^2\right)\nonumber \\
    &=\sum_{j\neq k}^d\sum_{l\neq j,k}^d \E\left[(\varepsilon^i_j)^2\right]\E\left[(\varepsilon^i_l)^2\right]\E\left[(\varepsilon^i_k)^2\right] \nonumber \\
    &\quad + 2\sum_{j\neq k}^d  \E\left[(\varepsilon^i_j)^2\right]\E\left[(\varepsilon^i_k)^4\right] \nonumber \\
    &\quad + \sum_{j\neq k}^d \E\left[(\varepsilon^i_j)^4\right]\E\left[(\varepsilon^i_k)^2\right] 
    + \E\left[(\varepsilon^i_k)^6\right] \nonumber \\
    &= (d^2+6d+8) \sigma^6 \label{eq:sm_B3f} \\
    B_5 &= \E\left(\sum_{j=1}^d g_j\varepsilon^i_j \; \delta^{i}(\varepsilon^i_k)^2\right) \nonumber \\
    &=\sum_{j=1}^d g_j \E\left[\varepsilon^i_j\right] \E\left[\delta^i\right]\E\left[(\varepsilon^i_k)^2\right] = 0\label{eq:sm_B5f} \\
    B_6 &= \E\left(\sum_{j=1}^d\sum_{l=1}^d g_j \varepsilon^i_j (\varepsilon^i_{l})^2 (\varepsilon^i_k)^2\right) \nonumber \\
    &=\sum_{j\neq k}^d \sum_{l\neq j,k}^d g_j \E\varepsilon^i_j\, \E(\varepsilon^i_{l})^2 \,\E(\varepsilon^i_k)^2 
    \nonumber \\
    &+ \sum_{j\neq k}^d g_j \E\varepsilon^i_j \,\E(\varepsilon^i_k)^4 + \sum_{j\neq k}^d g_j \E(\varepsilon^i_j)^2\, \E(\varepsilon^i_k)^3
    \nonumber \\
    &+ \sum_{j\neq k}^d g_j \E(\varepsilon^i_j)^3\,\E(\varepsilon^i_k)^2 
    + g_k \E(\varepsilon^i_k)^5 = 0 \label{eq:sm_B6f} \\
    B_7 &= \E\left[\delta^{i}\right] \E\left[\norm{\vepsilon^{\,i}}^2_2(\varepsilon^i_k)^2\right] = 0 \label{eq:sm_B7f} \\
    B_8 &= \E\left(\sum_{j=1}^d g_j \varepsilon^i_j (\varepsilon^i_k)^2\right) \nonumber \\
    &= g_k \E(\varepsilon^i_k)^3 + \sum_{j\neq k}^d g_j \E\left[\varepsilon^i_j\right] \E\left[(\varepsilon^i_k)^2\right] =0 \label{eq:sm_B8f} \\
    B_9 &= \E\left[\delta^i\right] \E\left[(\varepsilon^i_k)^2\right]=0 \label{eq:sm_B9f} \\
    \!\!\!\!B_{10} & =\E(\varepsilon^i_k)^4 + \sum_{j\neq k}^d \E(\varepsilon^i_j)^2 (\varepsilon^i_k)^2=(d+2)\sigma^4 \label{eq:sm_B10f}
\end{align}

Substituting~\cref{eq:sm_B1f,eq:sm_B2f,eq:sm_B3f,eq:sm_B5f,eq:sm_B6f,eq:sm_B7f,eq:sm_B8f,eq:sm_B9f,eq:sm_B10f} into~\cref{eq:second-moment-expansion}, we have the the second non-central moment  of $v^i_k$:
\begin{align*}
   \E\left[(v^i_k)^2\right] &=\frac{\tau^2\sigma^2}{M} + \left(\norm{\vec{g}}^2 + 2g_k^2\right)\sigma^4 + A^2\sigma^2  \\
    &+ R^2(d^2+6d+8)\sigma^6 + 2AR(d+2)\sigma^4 
\end{align*}
Ignoring the $\mathcal{O}(\sigma^6)$ term (as commonly $\sigma < 1$) and the remainder $R$ from Taylor expansion, we have:
\begin{equation}\label{eq:final_second_moment}
    \E\left[(v^i_k)^2\right] \approx \frac{\tau^2\sigma^2}{M} + (\norm{\vec{g}}^2 + 2g_k^2)\sigma^4 + A^2\sigma^2.
\end{equation}
\begin{table*}[t]
  \caption{Smooth benchmark functions used in the experiments with their search space, respectively.}
  \label{tab:functions}
  \centering
  \begin{tabular}{l|l|l}
    \toprule
    Name & $\mathcal{L}(\Vec{x})$ & Search Space\\
    \midrule
    Sphere & $\sum_{i=1}^d x_i^2$ & $[-5,5]^d$\\
    Ellipsoid  & $\sum_{i=1}^d 100^{\frac{i-1}{d-1}}x_i^2$ & $[-5,5]^d$\\
    Rotated Ellipsoid  & $\sum_{i=1}^d100^{\frac{d-i}{d-1}} x_i^2$ & $[-5,5]^d$\\
    Hyper-Ellipsoid  & $\sum_{i=1}^d ix_i^2$ & $[-5,5]^d$\\
    Rotated Hyper-Ellipsoid  & $\sum_{i=1}^d (d-i+1)x_i^2$ & $[-5,5]^d$\\
    Rastrigin  & $10d + \sum_{i=1}^d\left[x_i^2 - 10\cos(2\pi x_i) \right]$ & $[-5,5]^d$\\
    Trid  & $\sum_{i=1}^d (x_i-1)^2 - \sum_{i=2}^d x_ix_{i-1}$ & $[-d^2,d^2]^d$\\
    Cosine Mixture  & $-0.1\sum_{i=1}^d \cos(5\pi x_i) +\sum_{i=1}^d x_i^2$ & $[-1,1]^d$\\
    Bohachevsky  & $\sum_{i=1}^{d-1} \left[x_i^2 + 2 x_{i+1}^2 -0.3\cos(3\pi x_i) - 0.4\cos(4\pi x_{i+1}) + 0.7\right]$ & $[-15,15]^d$\\
    Schwefel02 & $\sum_{i=1}^d\left(\sum_{j=1}^i x_i\right)^2$ & $[-10,10]^d$\\
    \bottomrule
  \end{tabular}
\end{table*}
\begin{figure*}[h]
    \centering
    \includegraphics[width=.49\linewidth]{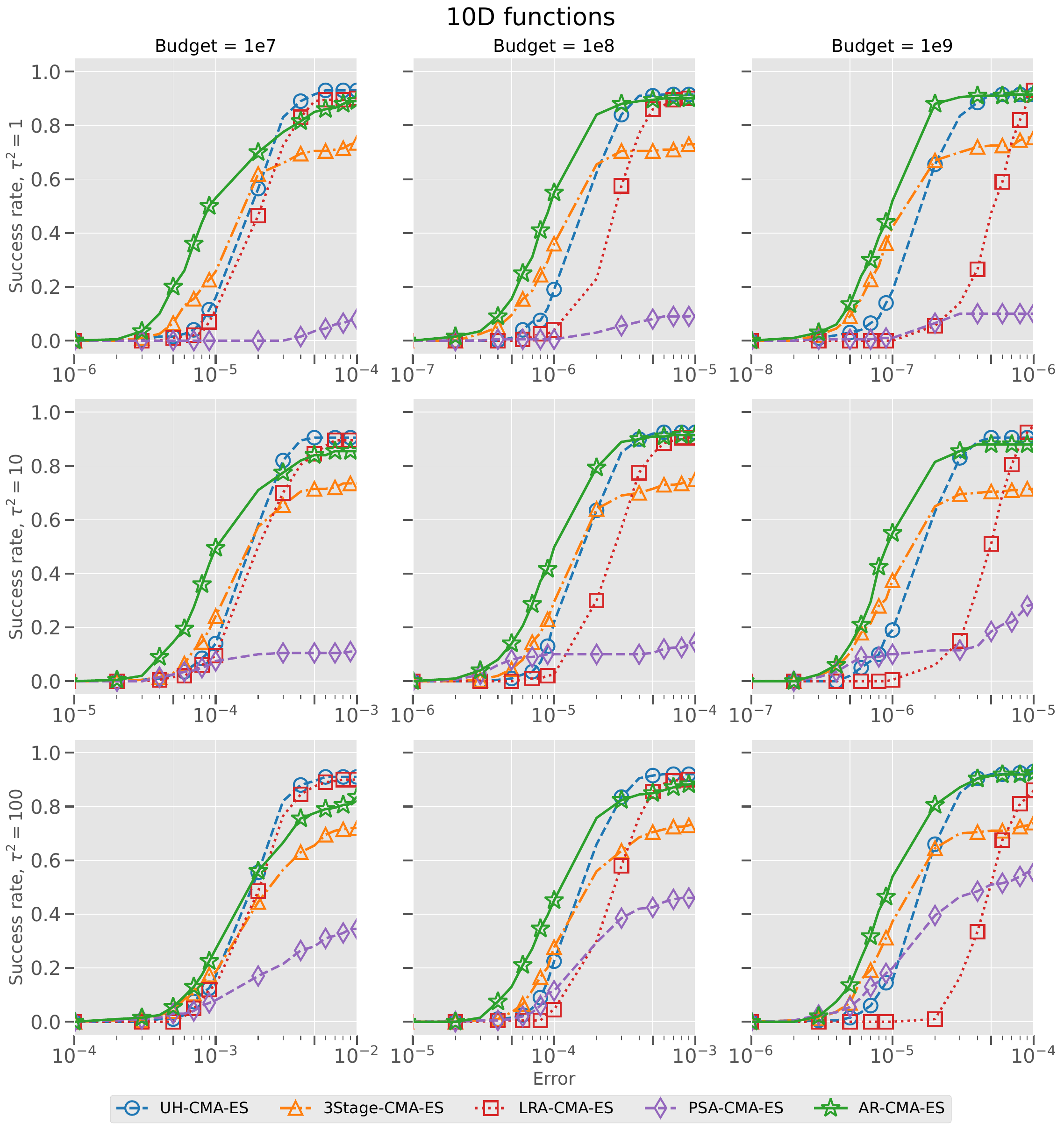}
    \includegraphics[width=.49\linewidth]{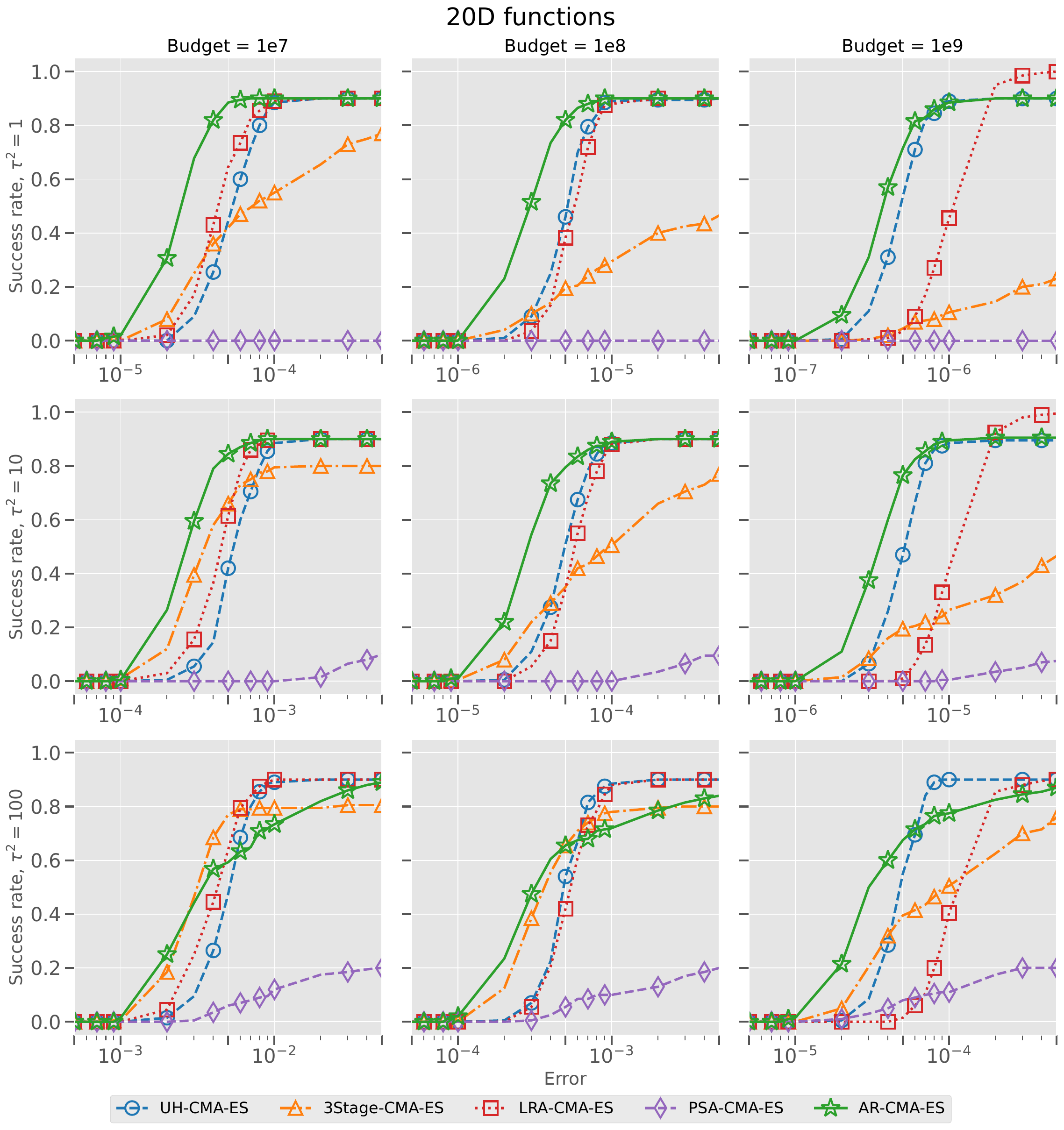}
    \caption{Empirical cumulative distribution functions (ECDFs) of the error ($\loss(\vec{m}) - \loss^*$) aggregated over all test functions are shown for each combination of the noise level ($\tau^2 \in \{1,10,100\}$)  and evaluation budget ($10^7, 10^8, 10^9$). Left: 10-dimensional results; Right: 20-dimensional.}
    \label{fig:all_10D_noise}
\end{figure*}

\section{Lower Bound of the efficiency $\gamma$}\label{sec:lower-bound-improvement}

\paragraph{Proof of~\cref{eq:lower-bound-improvement}}
Taking expectations on both sides of \cref{eq:lower_bound}, we have:
\begin{align}
    &\E\left(\loss(\vtheta) - \loss(\vtheta + \vec{z})\right)  \nonumber\\
    &\geq -\underbrace{\E\left[\langle\vec{g},\vec{v}\,'\rangle\right]}_{C_1}- \frac{Ks_{\text{max}}}{2} \underbrace{\E\left[\norm{\vec{v}\,'}^2_2 \right]}_{C_2}. \label{eq:lb-original}
\end{align}
We simplify terms $C_1$ and $C_2$:
\begin{align}
    C_1 &= \E\left\langle\vec{g},\frac{1}{2\lambda A}\sum_{i=1}^{\lambda}\vec{v}^{\,i}\right\rangle \nonumber \\
    &=\frac{1}{2\lambda A}\sum_{i=1}^{\lambda}\sum_{k=1}^d g_k\E\left[ v^i_k\right] \nonumber \\
    &\labelrel={eq:C_1_e} \frac{1}{2\lambda A}\sum_{i=1}^{\lambda}\sum_{k=1}^d -g_k^2\sigma^2 = -\frac{\sigma^2}{2A}\|\vec{g}\|_2^2 &\label{eq:lb_C1f}
\end{align}
Note that in step~\eqref{eq:C_1_e}, we use the first moment result in ~\cref{eq:final_first_moment}.
\begin{align}
    \!\!\!\!C_2 &=\E\left\langle \frac{1}{2\lambda A}\sum_{i=1}^{\lambda}\vec{v}^{\,i},\frac{1}{2\lambda A}\sum_{j=1}^{\lambda}\vec{v}^{\,j}\right\rangle \nonumber \\
        &=\frac{1}{4\lambda^2 A^2}\sum_{i, j=1}^{\lambda}\sum_{k=1}^d\E\left[ v^i_k v^j_k \right] \nonumber \\
        &=\frac{1}{4\lambda^2 A^2}\left(\sum_{i\neq j}\sum_{k=1}^d\E\left[ v^i_k\right]\E\left[v^j_k \right] + \sum_{i=1}^{\lambda}\sum_{k=1}^d\E\left[ (v^i_k)^2\right] \right)\nonumber \\
        &\labelrel={eq:C_2_f} \frac{1}{4\lambda^2 A^2}\Bigg[\sum_{i\neq j}\sum_{k=1}^d g_k^2\sigma^4 \nonumber\\
        &+\lambda\sum_{k=1}^d \left(\frac{\tau^2\sigma^2}{M} + (\|\vec{g}\|^2 + 2g_k^2)\sigma^4 + A^2\sigma^2\right)\Bigg] \nonumber \\
        &= \frac{\sigma^2 d\tau^2}{4M\lambda A^2}
        + \frac{(\lambda + d + 1)\sigma^4\|\vec{g}\|_2^2 + A^2d\sigma^2}{4\lambda A^2}  \label{eq:lb_C2f}
\end{align}
Note that in step~\eqref{eq:C_2_f}, we use the results from~\cref{eq:final_first_moment,eq:final_second_moment}.

Combining~\cref{eq:lb_C1f,eq:lb_C2f} with~\cref{eq:lb-original}, we have:
\begin{align}
    &\E\left(\loss(\vec{m}) - \loss(\vec{m} + \vec{z})\right) \nonumber \\
    &\geq \frac{\sigma^2}{2A}\|\vec{g}\|^2 
    -\frac{1}{M}\frac{Ks_{\text{max}} \sigma^2d \tau^2}{8\lambda A^2}\nonumber \\
    &\; -\frac{Ks_{\text{max}}(\lambda + d + 1)\sigma^4}{8\lambda A^2} \|\vec{g}\|_2^2 - \frac{dKs_{\text{max}}\sigma^2}{8\lambda } \nonumber \\
    &= \frac{\sigma^2}{2A}\|\vec{g}\|_2^2 
    - \frac{\sigma^4(\lambda+d+1)Ks_{\text{max}}}{8\lambda A^2} \|\vec{g}\|_2^2 \nonumber \\
    &\;-\frac{dKs_{\text{max}}\sigma^2}{8\lambda} -\frac{1}{M}\frac{\sigma^2dKs_{\text{max}}\tau^2}{8\lambda A^2}
\end{align}

\section{Experimental Results on Smooth functions} \label{sec:all_experiments}
We include detailed experimental results here. In Table~\ref{tab:functions}, we list the definitions of the smooth test functions considered in this study. In Fig.~\ref{fig:all_10D_noise}, we show the ECDF curves for each combination of the noise level and evaluation budget. Also, in Fig.~\ref{fig:all_functions_general_plots_10D} and~\ref{fig:all_functions_general_plots_20D}, we include the ECDF on each function for 10-, and 20-dimensional experiments, respectively.

\begin{figure*}
    \centering
    \includegraphics[width=0.8\linewidth]{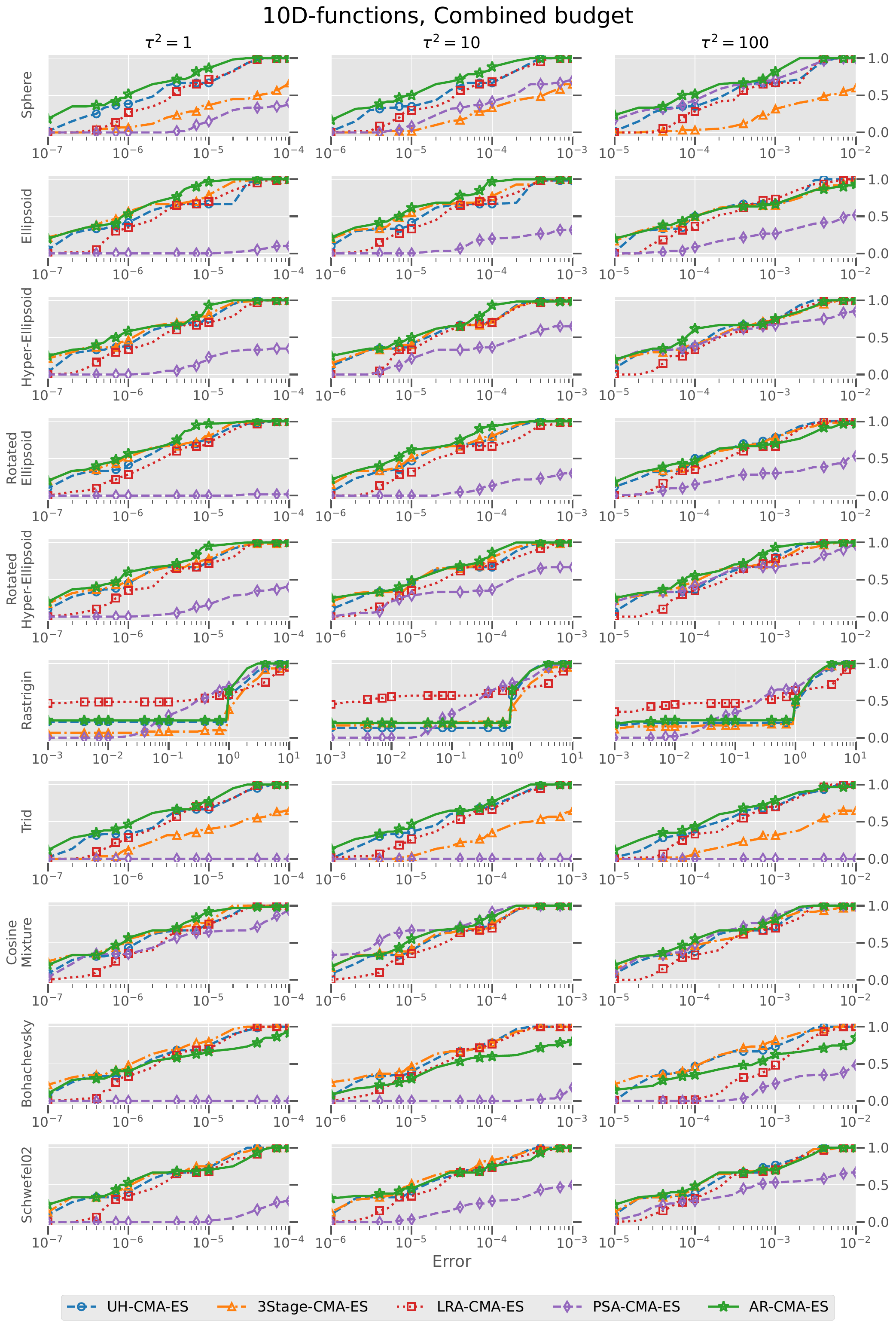}
    \caption{Empirical Cumulative Distribution Function (ECDF) of the optimization error for each 10D function and noise level ($\tau^2 \in \{1, 10, 100\}$). }
    \label{fig:all_functions_general_plots_10D}
\end{figure*}

\begin{figure*}
    \centering
    \includegraphics[width=0.8\linewidth]{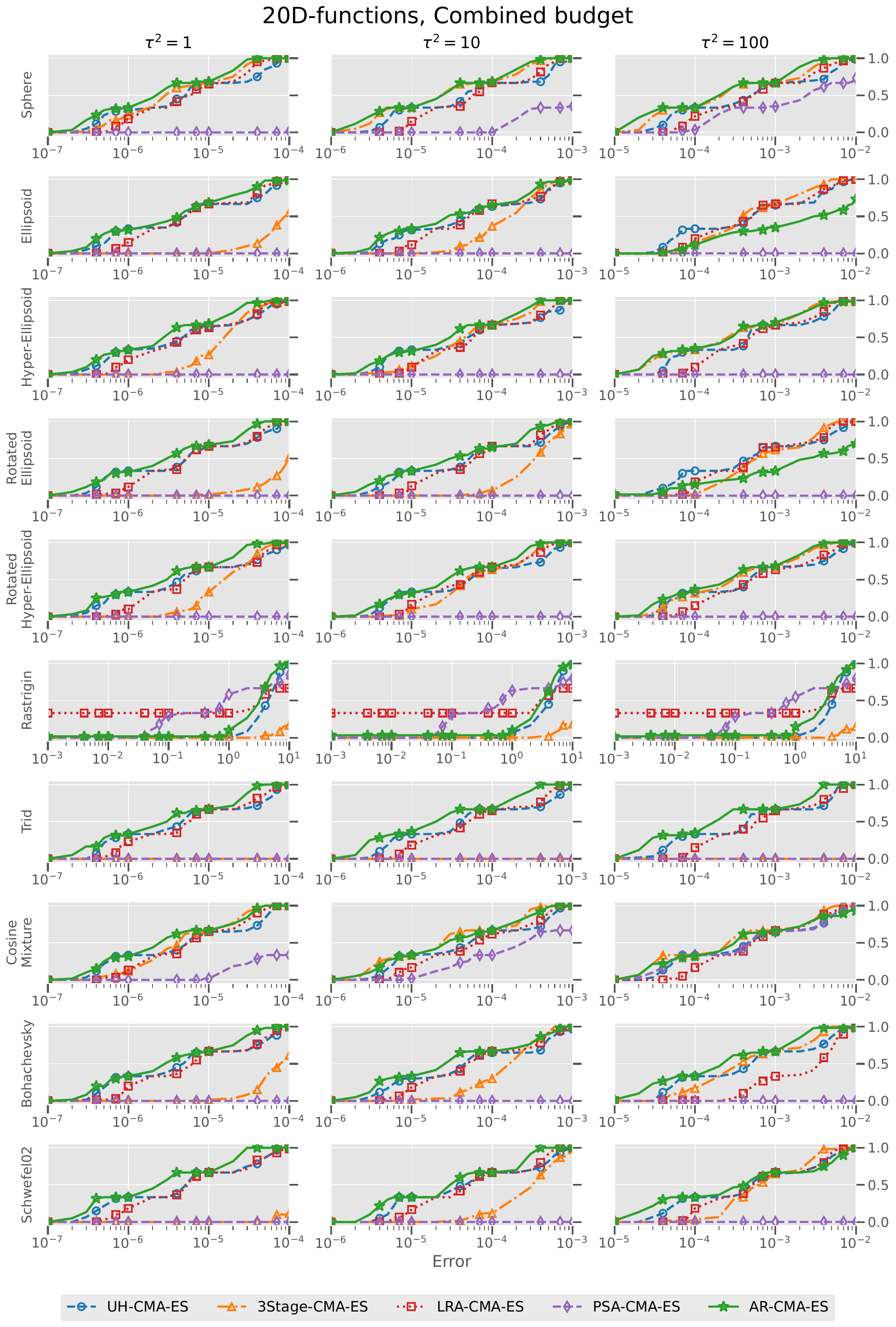}
    \caption{Empirical Cumulative Distribution Function (ECDF) of the optimization error for each 20D function and noise level ($\tau^2 \in \{1, 10, 100\}$).}
    \label{fig:all_functions_general_plots_20D}
\end{figure*}

\section{Experimental Results on Non-smooth functions}\label{sec:nonsmoothresults}
In Table~\ref{tab:non-smooth-functions}, we list the definitions of the non-smooth test functions considered in this study. 
Table~\ref{tab:non-smooth-noiseless} and Table~\ref{tab:non-smooth-noisy} show the performance comparison of AR-CMA-ES with other variants of CMA-ES for non-smooth test functions in noiseless and noisy settings, respectively.
In Fig.~\ref{fig:all_functions_general_plots_10D_non_smooth}, we show the ECDF curves on each function for 10-dimensional experiments for noise level of $\tau^2=1$ and evaluation budget of $10^9$.

\begin{table*}[!htbp]
  \caption{Non-smooth benchmark functions used in the experiments with their search space, respectively.}
  \label{tab:non-smooth-functions}
  \centering
  \begin{tabular}{l|l|l}
    \toprule
    Name & $\mathcal{L}(\Vec{x})$ & Search Space\\
    \midrule
    Sum Absolute & $\sum_{i=1}^{d} \left|x_i\right| $ &  $[-1,1]^d$ \\
    Nesterov F1~\cite{gurbuzbalaban2012nesterov} & $ 0.25 (x_1 - 1)^2 + \sum_{i=2}^{d} \left| x_i - 2x_{i-1}^2 + 1 \right| $ &  $[-1,1]^d$ \\
    Nesterov F2~\cite{gurbuzbalaban2012nesterov} & $ 0.25 \left| x_1 - 1 \right| + \sum_{i=2}^{d} \left| x_i - 2\left| x_{i-1} \right| + 1 \right| $ &  $[-1,1]^d$ \\
    Griewank~\cite{bosse2024piecewise} & $1 + \frac{1}{4000} \sum_{i=1}^{d} x_i^2 - \prod_{i=1}^{d} \left( \left| \cos\left( \frac{x_i}{2\sqrt{i}} \right) \right| - \left| \sin\left( \frac{x_i}{2\sqrt{i}} \right) \right| \right)$ &  $[-1,1]^d$ \\
    
    \bottomrule
  \end{tabular}
\end{table*}

\begin{table*}[!htbp]
\centering
\caption{Numerical verification of AR-CMA-ES against CMA-ES for $d=10$ on noiseless test non-smooth functions ($\tau^2 = 0$) with evaluation budget of $10^9$. For each algorithm, we report the mean and standard error of the final noiseless precision achieved over $20$ runs.\label{tab:non-smooth-noiseless}}
\begin{tabular}{|l|c|c|}
\hline
\textbf{Problem} & \textbf{AR-CMA-ES} & \textbf{CMA-ES} \\
\hline
Sum Absolute & $1.58 \times 10^{-10} \pm 1.55 \times 10^{-10}$ & $3.24 \times 10^{-11} \pm 2.30 \times 10^{-11}$ \\
\hline
Nesterov F1  & $1.85 \times 10^{-1} \pm 2.88 \times 10^{-1}$    & $2.14 \times 10^{-1} \pm 3.15 \times 10^{-1}$    \\
\hline
Nesterov F2  & $1.66 \times 10^{-1} \pm 1.11 \times 10^{-1}$    & $2.03 \times 10^{-1} \pm 1.18 \times 10^{-1}$    \\
\hline
Griewank    & $5.64 \times 10^{-7} \pm 2.88 \times 10^{-7}$    & $3.62 \times 10^{-7} \pm 1.33 \times 10^{-7}$    \\
\hline
\end{tabular}
\end{table*}

\begin{figure*}[!htbp]
    \centering
    \includegraphics[width=0.85\linewidth]{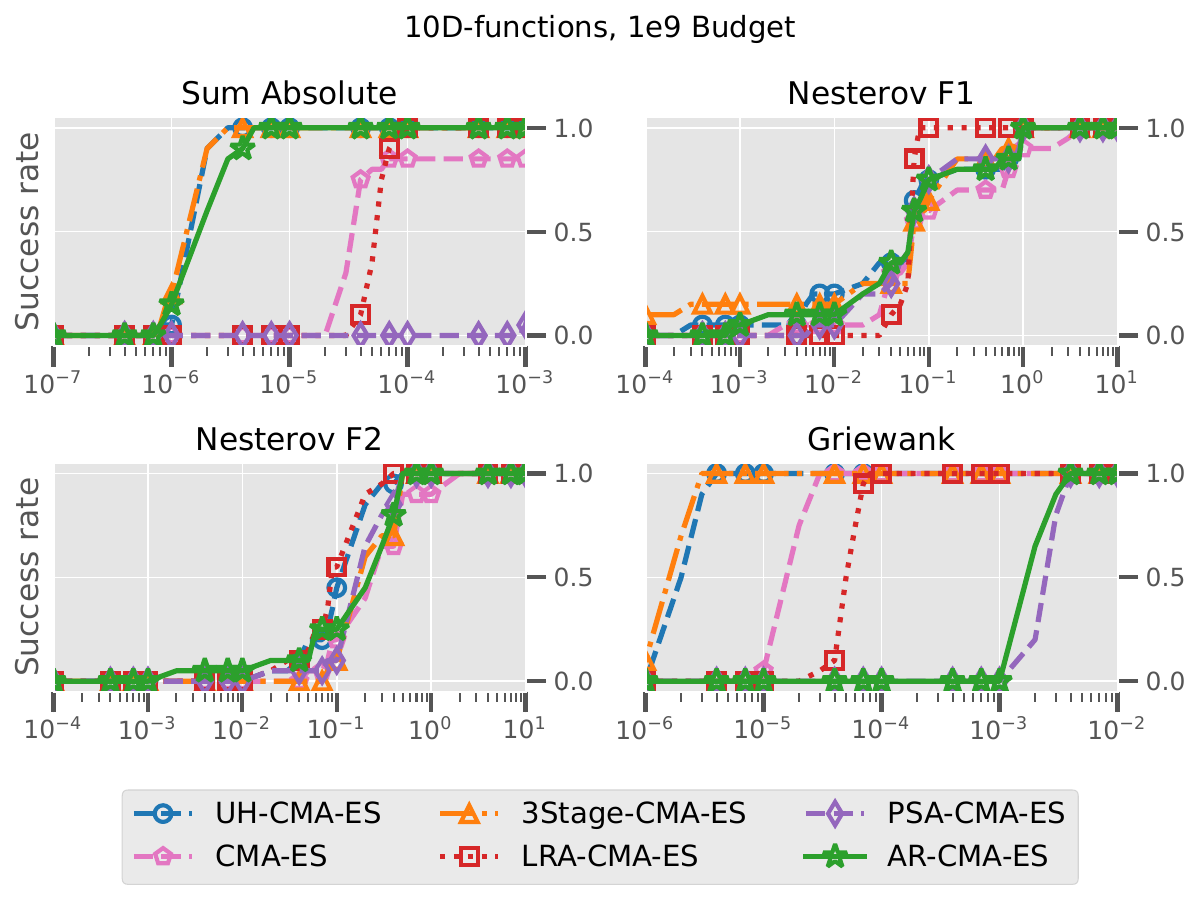}
    \caption{Empirical Cumulative Distribution Function (ECDF) of the optimization error for each 10D non-smooth function and noise level $\tau^2 = 1$.}
    \label{fig:all_functions_general_plots_10D_non_smooth}
\end{figure*}
\begin{table*}[!htbp]
\centering
\caption{Performance comparison across non-smooth benchmark functions (columns) for different CMA variants (rows) under noise ($\tau^2 = 1$) for $d=10$ and evaluation budget $10^9$. For each algorithm, we report the mean and standard error of the final noiseless precision achieved over $20$ runs.\label{tab:non-smooth-noisy}}
\begin{tabular}{|l|c|c|c|c|}
\hline
\textbf{Algorithm} & \textbf{Sum Absolute} & \textbf{Nesterov F1} & \textbf{Nesterov F2} & \textbf{Griewank} \\
\hline
AR-CMA-ES   & $1.97 \times 10^{-6} \pm 9.81 \times 10^{-7}$ & $2.16 \times 10^{-1} \pm 3.47 \times 10^{-1}$ & $2.25 \times 10^{-1} \pm 1.59 \times 10^{-1}$ & $1.97 \times 10^{-3} \pm 7.03 \times 10^{-4}$ \\
\hline
CMA-ES      & $4.22 \times 10^{-3} \pm 1.23 \times 10^{-2}$ & $5.10 \times 10^{-1} \pm 9.82 \times 10^{-1}$ & $3.65 \times 10^{-1} \pm 4.24 \times 10^{-1}$ & $1.64 \times 10^{-5} \pm 4.55 \times 10^{-6}$ \\
\hline
3-Stage-CMA-ES  & $1.37 \times 10^{-6} \pm 4.48 \times 10^{-7}$ & $1.79 \times 10^{-1} \pm 2.86 \times 10^{-1}$ & $2.44 \times 10^{-1} \pm 1.47 \times 10^{-1}$ & $1.65 \times 10^{-6} \pm 4.78 \times 10^{-7}$ \\
\hline
UH-CMA-ES   & $1.46 \times 10^{-6} \pm 4.46 \times 10^{-7}$ & $2.15 \times 10^{-1} \pm 3.47 \times 10^{-1}$ & $1.40 \times 10^{-1} \pm 1.05 \times 10^{-1}$ & $2.08 \times 10^{-6} \pm 5.85 \times 10^{-7}$ \\
\hline
LRA-CMA-ES  & $5.50 \times 10^{-5} \pm 1.45 \times 10^{-5}$ & $6.12 \times 10^{-2} \pm 1.12 \times 10^{-2}$ & $1.18 \times 10^{-1} \pm 8.39 \times 10^{-2}$ & $5.15 \times 10^{-5} \pm 1.09 \times 10^{-5}$ \\
\hline
PSA-CMA-ES  & $1.66 \times 10^{-3} \pm 7.11 \times 10^{-4}$ & $1.93 \times 10^{-1} \pm 3.28 \times 10^{-1}$ & $2.11 \times 10^{-1} \pm 1.25 \times 10^{-1}$ & $2.42 \times 10^{-3} \pm 7.18 \times 10^{-4}$ \\
\hline
\end{tabular}
\end{table*}

\end{document}